\setlist[itemize]{leftmargin=*}
\newcommand\footnoteref[1]{\protected@xdef\@thefnmark{\ref{#1}}\@footnotemark}
\newtheorem{definition}{Definition}
\newif\ifcomments
  \newcommand{\colornote}[3]{{\color{#1}\bf{(#2) #3}\normalfont}}
  \newcommand{\colornote}[3]{}
\definecolor{deepblue}{rgb}{0,0,0.5}
\definecolor{deepred}{rgb}{0.6,0,0}
\definecolor{deepgreen}{rgb}{0,0.5,0}
\newcommand{\pp}[2]{\frac{\partial #1}{\partial #2}}
\newcommand{\E}{\mathbb{E}}
\newcommand{\loss}{\mathcal{L}}
\DeclareMathOperator*{\argmin}{arg\,min}
\newcommand{\reals}{\mathbb{R}}
\newcommand{\cmark}{\ding{51}}%
\newcommand{\xmark}{\ding{55}}%
\newcommand{\classes}{\{0, 1, \dots, K\}}
\newcommand{\x}{x}
\newcommand{\z}{z}
\newcommand{\w}{\mathbf{w}}
\newcommand{\pparams}{\mathbf{w}_p}
\newcommand{\vparams}{\mathbf{w}_v}
\newcommand{\prover}{\mathcal{P}}
\newcommand{\verifier}{\mathcal{V}}
\newcommand{\real}{\mathbb{R}}
\newcommand{\pone}{\mathbf{w}_1}
\newcommand{\ptwo}{\mathbf{w}_2}
\newcommand{\zeroone}{\{0, 1\}}
\title{\vspace{-12mm}Learning to Give Checkable Answers \\ with Prover-Verifier Games}
\newcommand*\samethanks[1][\value{footnote}]{\footnotemark[#1]}
\author{\thanks{Correspondence to: Cem Anil and Roger Grosse (\texttt{\{anilcem, rgrosse\}@cs.toronto.edu}).}
Cem Anil\thanks{University of Toronto and Vector Institute for Artificial Intelligence. \texttt{\{anilcem,gdzhang,ywu,rgrosse\}@cs.toronto.edu}.} ,\,
Guodong Zhang\samethanks[2] ,\,
Yuhuai Wu\samethanks[2] ,\,
Roger Grosse\samethanks[2] ,\,
}
\begin{document}
\etocdepthtag.toc{mtchapter}
\etocsettagdepth{mtchapter}{subsection}
\etocsettagdepth{mtappendix}{none}

\maketitle

\begin{abstract}
Our ability to know when to trust the decisions made by machine learning systems has not kept up with the staggering improvements in their performance, limiting their applicability in high-stakes domains. We introduce Prover-Verifier Games (PVGs), a game-theoretic framework to encourage learning agents to solve decision problems in a verifiable manner. The PVG consists of two learners with competing objectives: a trusted verifier network tries to choose the correct answer, and a more powerful but untrusted prover network attempts to persuade the verifier of a particular answer, regardless of its correctness. The goal is for a reliable justification protocol to emerge from this game. We analyze variants of the framework, including simultaneous and sequential games, and narrow the space down to a subset of games which provably have the desired equilibria. We develop instantiations of the PVG for two algorithmic tasks, and show that in practice, the verifier learns a robust decision rule that is able to receive useful and reliable information from an untrusted prover. Importantly, the protocol still works even when the verifier is frozen and the prover's messages are directly optimized to convince the verifier. 
\end{abstract}

\section{Introduction}
The astonishing performance of today's dominant learning paradigm -- optimizing powerful differentiable function approximators to minimize suitable loss functions -- often comes at the cost of poor robustness and reliability. It is common for powerful deep learning systems to be vulnerable to adversarial attacks \cite{goodfellow2014explaining}, to display erratic behaviour on out-of-distribution data  and be very confident of wrong predictions \cite{che2019deep}. 
How can we train our learning algorithms to produce outputs that can be checked by humans or by learning algorithms that are cheaper or better understood?

Over the past decades, the field of computational complexity has greatly expanded our notion of proof to include formalisms such as interactive, zero-knowledge, and probabilistically checkable proofs~\cite{goldreich2008probabilistic}. Most of these notions can be thought of in terms of a game between a powerful but untrusted prover and a computationally limited but trusted verifier. Taking inspiration from this, we propose the Prover-Verifier Game (PVG), where two learning agents play the role of prover and verifier, and are allowed to converse. The verifier aims to determine the correct answer to a decision problem, and the prover aims to convince the verifier of a particular answer (regardless of its correctness). Since the prover is untrustworthy, the verifier will only find its messages useful to the extent that it can independently verify the information. If all goes well, then the game dynamics will lead to a proof protocol which, if not mathematically sound, is at least sufficient for the whole system to achieve more reliable predictions than the verifier can achieve unaided.

We analyze the desirability of the game equilibria implied by several variants of the Prover-Verifier Game concept and narrow the space down to a subset of games that theoretically have the desired equilibria. Picking the right game equilibrium concept turns out to be essential: we prove that formulating the PVG as a sequential game in which the prover agent plays first leads to dysfunctional solutions. On the other hand, the simultaneous (connected to Nash equilibria) and verifier-first sequential formulations (connected to verifier-leading Stackelberg equilibria) have desirable equilibria. We formally show, on a illustrative problem, that gradient based differentiable game optimizers can find desirable proof-verification protocols.

Do reliable justification protocols emerge in practice if we let artificial agents play the Prover-Verifier Game? We first develop an approach to empirically assess how robust a learned verification protocol is. We then run simulations on two algorithmic tasks using a practical instantiation of the PVG concept. As predicted by our theory, PVG-trained verifiers learn to receive useful and reliable information from untrusted provers by following sensible verification protocols, whereas those trained alongside fully collaborative provers are easily deceived.

\vspace{-4mm}
\section{Background}
\subsection{Interactive Proof Systems (IPS)}
\label{sec:ips}
Interactive proof systems generalize the notion of a mathematical proof to a dialogue between two agents - a prover and a verifier - to solve a decision problem~\cite{arora2009computational, thaler2019interactive}. The verifier agent, who is trustworthy but computationally constrained, is tasked with producing a correct answer to the decision problem. It can exchange messages with a computationally unbounded, yet potentially adversarial prover agent. The communication protocol between the prover and verifier constitutes an \textit{interactive proof system} if and only if it is sound and complete:
\begin{definition}[Completeness and Soundness]
    The verifier of an interactive proof system is complete iff there exists a prover that can always convince the verifier that the answer is ``yes", if the correct answer is ``yes". It is sound iff there doesn't exist a prover that can trick the verifier into answering ``yes" if the correct answer is ``no". 
\end{definition}

\vspace{-4mm}
\subsection{Differentiable Game Optimization}
A two-player differentiable game consists of two agents whose strategies are parametrized by $\mathbf{w}=(\mathbf{w}_1, \mathbf{w}_2) \in \real^d$ that take turns to minimize their differentiable loss functions $(\loss_1 , \loss_2): \real^d \rightarrow \real$.
An \textit{equilibrium concept} determines which strategies will be adopted by the players. A \textit{Nash equilibrium} is achieved when no player can unilaterally improve its objective function.

\begin{definition}[Nash Equilibrium~\cite{neumann1944theory}]
    The strategies parametrized by $(\pone^*, \ptwo^*)$ constitute a Nash equilibrium\footnote{\label{ft:unique}We assume uniqueness of equilibria for simplicity. } of the two-player sequential differentiable game with loss functions  $(\loss_1 , \loss_2): \real^d \rightarrow \real$ if and only if they minimize their loss functions keeping the other player's parameters fixed:
    \begin{equation}
        \w_1^* = \argmin_{\pone} \loss_1(\pone, \ptwo^*), \quad \ptwo^* = \argmin_{\ptwo} \loss_2(\pone^*, \ptwo)
    \end{equation}
    \vspace{-0.5cm}
\end{definition}
The notion of equilibrium considered by Generative Adversarial Networks~\cite{goodfellow2014generative} is an example of a Nash Equilibrium. A \textit{Stackelberg equilibrium} differs from the Nash equilibrium in that one of the players is deemed the ``leader" and the other the ``follower"~\cite{wang2019solving}. It is assumed that the follower agent always picks the optimal strategy for a given leader strategy. In response, the leader modifies its strategy by factoring in how the follower agent will respond to the modification. 
\begin{definition}[Stackelberg Equilibrium~\cite{fiez2019convergence}]
    Let $\w_1$ parametrize the strategy of the ``leader agent" and $\w_2$ parametrize that of the ``follower agent". The loss functions corresponding to the agents are $\loss_1$ and $\loss_2$ respectively. The strategies parametrized by $(\w_1, \w_2)$ constitute a Stackelberg equilibrium\footnoteref{ft:unique} if and only if (1) the follower's strategy is optimal given the leader's strategy, and (2) the leader's strategy is optimal taking into consideration how the follower will respond to modifications. 
    \begin{equation}
        \w_1^* = \argmin_{\w_1} \loss_1(\w_1, \w_2^*(\w_1)), \quad \w_2^*(\w_1) = \argmin_{\w_2} \loss_2(\w_1^*, \w_2)
    \end{equation}
\end{definition}
\section{Prover-Verifier Game}

\begin{figure}[t!]
\centering
\begin{subfigure}[t]{.15\paperwidth}
    \centering  
  \includegraphics[width=0.65\textwidth]{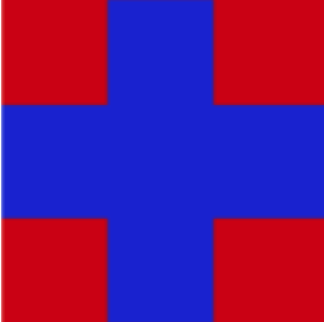}
  \caption{Pattern to detect.}
  \label{fig:plus}
\end{subfigure}
\begin{subfigure}[t]{.27\paperwidth}
    \centering
  \includegraphics[width=0.65\textwidth]{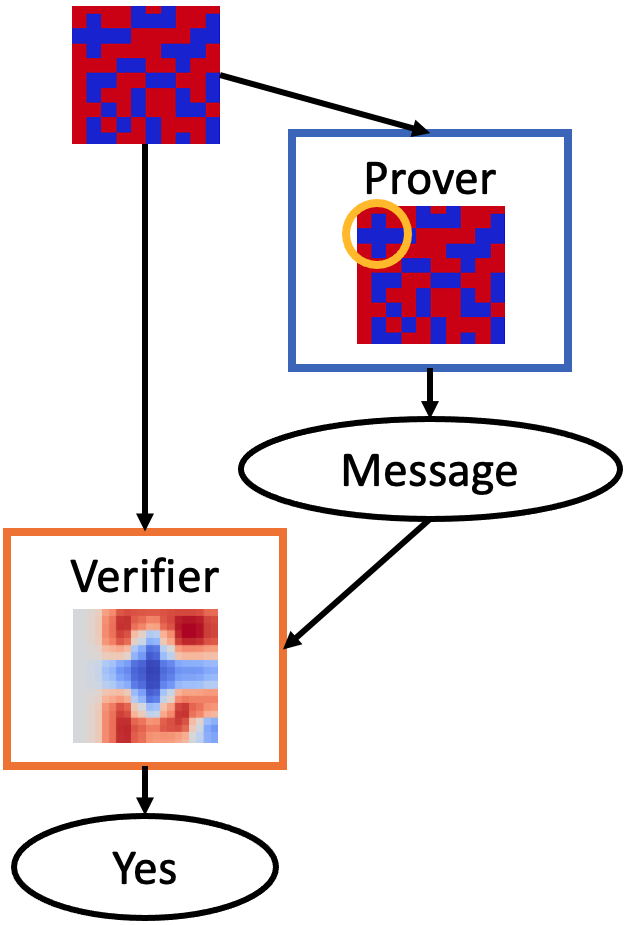}
  \caption{When the pattern exists. }
  \label{fig:positive_ex}
\end{subfigure}
\begin{subfigure}[t]{.27\paperwidth}
    \centering
  \includegraphics[width=0.65\textwidth]{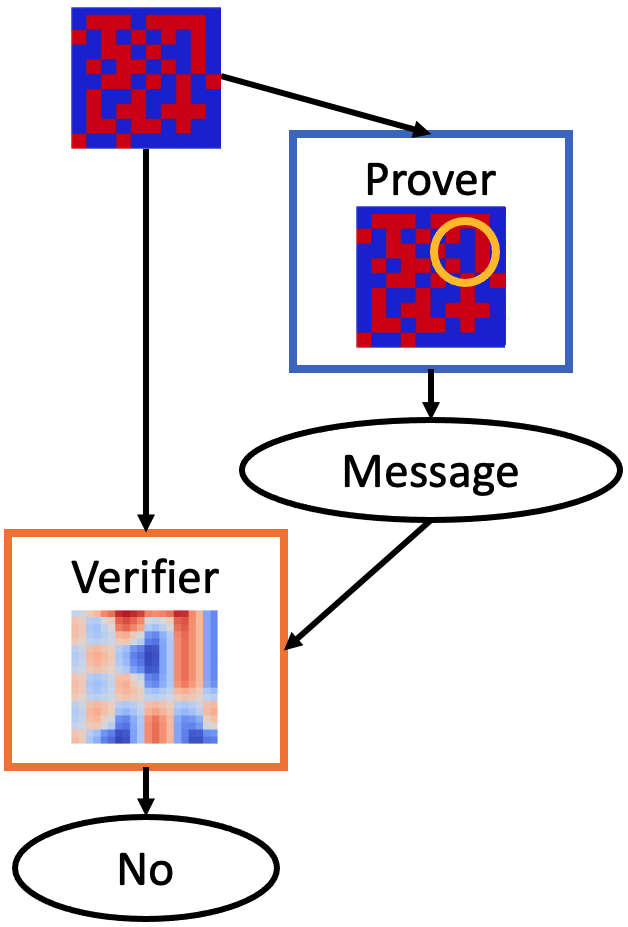}
  \caption{When the pattern doesn't exist. }
  \label{fig:negative_ex}
\end{subfigure}
\vspace{-0.2cm}
\caption{Depiction of a proof-verification protocol learned via Prover-Verifier Training. 
When trained on the task of detecting whether there's a blue plus pattern (\ref{fig:plus}) in the input image, Prover-Verifier Training discovers a coordinate-based verification protocol. If there's a plus in the image (\ref{fig:positive_ex}), the prover sends its coordinate, and the verifier accepts this certificate. If there's not a plus in the image (\ref{fig:negative_ex}), then the prover sends the coordinates of a maximally convincing image patch, but the verifier rejects this certificate.
}
\vspace{-0.5cm}
\label{fig:splash}
\end{figure}

The Prover-Verifier Game aims to learn decision rules with a reliable internal verification step. We describe what we mean for a verification protocol to be reliable, then outline the game formulation. 
\subsection{Desiderata and Prover-Verifier Incentives} 

\textbf{Desiderata:} Designing the right prover-verifier game requires precisely defining what a desirable outcome/equilibrium of the game looks like. With the ``completeness" and ``soundness" definitions in mind (Section \ref{sec:ips}), we list the properties we seek in a desirable verifier protocol:

\begin{itemize}
    \itemsep0em
    \item \textbf{Possibility of Perfect Recall:} There should exist a prover that can help the verifier achieve perfect recall --- the ratio of true positive predictions to all positive examples. 
    \item \textbf{Guarantee of Perfect Precision: } There shouldn't exist any prover which can trick the verifier into achieving non-perfect precision --- the ratio of true positive predictions to all positive predictions. 
\end{itemize}
``Possibility of perfect recall" is connected to completeness, and implies that with the right proofs, the verifier can achieve zero false negative rate. ``Guarantee of perfect precision" is related to soundness and implies that the verifier always has zero false positive rate regardless of which proof is used. 

\textbf{Picking the right prover-verifier incentives:} We propose to set up the prover-verifier game with the following incentive structure: the verifier is incentivized to always give the correct answer to the given decision problem, and the prover is incentivized to get the verifier to answer ``yes" to the decision problem, regardless of the correct answer. This structure encourages collaborative dynamics when the correct answer to the decision problem is ``yes" (which is linked to recall) and encourages adversarial dynamics if the correct answer is ``no" (which is linked to precision). As we will prove formally in Section \ref{sec:desirable_eq}, this incentive structure, when embedded in the right game setup with the right loss functions, can lead to a proof-verification protocol that meets the desiderata.

\subsection{Different Prover-Verifier Game Formulations}
\textbf{Notation}
Let $x \sim p_X(x)$, $y \sim p_Y(y)$ and $y^\prime \sim p_{Y^\prime}(y^\prime)$ be the input, verifier label and prover label random variables where we have $x \in \real^{n_x}$, $y \in \{0, 1, ..., K\}$ and $y^\prime \in \{0, 1, ..., K \}$. Let their joint distribution be $p_D$.

The prover agent $\prover_{\pparams}$, parametrized by $\pparams \in \real^{n_p}$, outputs the conditional probability distribution $p_{p}(z|x)$ over real valued ``message" vectors $z \in \real^{n_z}$ conditioned on inputs. 
The verifier agent $\verifier_{\vparams}$, parametrized by $\vparams \in \reals^{n_v}$, represents the conditional probability distribution $p_{v}(\hat{y}|x, z)$ over the predicted labels $\hat{y} \in \{0, 1, ..., K \}$, conditioned on the inputs $\x$ and messages $\z$.

\textbf{Loss Functions:} We pick the following loss functions for the verifier and prover: 
\begin{align}
    &\loss_v = \E_{\substack{(\x, y, y') \sim p_D, \; \z \sim p_{p}(z|x)}}[- \log p_v(y|x, z)] \label{eq:vloss}\\
    &\loss_p = \E_{\substack{(\x, y, y') \sim p_D, \; \z \sim p_{p}(z|x)}}[- \log p_v(y^\prime|x, z)] \label{eq:ploss}
\end{align}

Since we wish for the verifier agent to try and solve the problem to the best of its ability, we set $p_Y$ to be the correct label distribution. Since we wish to incentivize the prover to try and defend a particular answer regardless of its correctness, we set $p_{Y^\prime}(y^\prime)=1$ where $y' \in \classes$ is the output that the prover is defending. 
In a decision problem, $y'$ will be one of $0$ or $1$. One can consider variations on these loss functions by replacing the negative log with other functions that are convex and monotonically increasing. We prefer the aforementioned loss functions when the verifier agent represents a softmax policy over its outputs. This guarantees that the gradients vanish iff the agents fulfill their incentives (see Appendix \ref{app:vanish}). 

\vspace{0.14cm}
\textbf{Suitable Equilibrium concepts:} There are three obvious ways to set up the order in which the agents play (i.e. pick their strategies parametrized by $\pparams$ and $\vparams$) the Prover-Verifier Game: 1) The prover and verifier play simultaneously, 2) The prover plays first, 3) The verifier plays first. The simultaneous setup leads to picking \emph{Nash equilibrium} as the equilibrium concept. The latter two lead to picking \emph{Stackelberg equilibrium}, where prover or verifier is the leader and can therefore reason about how the opponent will respond to a given strategy. Interestingly, as we show in Section \ref{sec:desirable_eq}, not all of these formulations lead to equilibria that satisfy our desiderata. 

\vspace{0.14cm}
\textbf{When Instances are Revealed: } We can also arrive at different game formulation depending on whether the problem instances (i.e., the input and prover-verifier labels) are revealed to the agents before or after they pick their strategies (i.e. weights). This leads to eight different game formulations: two from the simultaneous setup (before or after the simultaneous moves) and six from the sequential setup (before, in-between and after the moves, for each of the two sequential formulations). 

\vspace{0.14cm}
\textbf{How the Prover and Verifier Interact:} 
The design space of how the prover and verifier interact with each other is large. 
We only consider the single-step, no feedback formulation throughout the paper and leave the analysis of other formulations as future work. Note that the single-step, no feedback formulation is related to NP proof systems and include sophisticated proof-verification protocols. We can also derive numerous PVG formulations by modifying the communication channel between the prover and verifier. Various decisions regarding the channel include: 1) If the channel is stochastic or deterministic (and what kind of noise there is) 2) Whether prover's messages are real valued vectors, or discrete tokens 3) Whether the prover and verifier use an already constructed communication protocol, such as natural language. 

\subsection{PVG Formulations with Desirable Equilibria}
\label{sec:desirable_eq}
We list  in Table \ref{tbl:desirable_eq} all possible game instantiations obtained by varying the 1) player order and 2) when the problem instance is revealed. Our goal is to find which -- if any -- of these formulations have desirable equilibria. 
We consider the following abstract decision problem formulation to compare and contrast different PVG formulations:

\begin{definition}[Abstract Decision Problem]\label{def:abs_dec}
    Let $\mathcal{X}$, $\mathcal{M}$, $\mathcal{Y} = \zeroone$ and $\mathcal{\tilde{Y}} = \zeroone$ 
    be the (discrete) sets of all inputs, prover messages, and labels, respectively. Let $\mathbb{P}_{X}$ represent the input distribution. Assume the label is a deterministic function of the input, $y = f(x)$, and the prover is always arguing for the label $y'=1$. The prover agent can choose a policy $p \in \mathcal{P}$, where $\mathcal{P}$ is a set of mappings $\mathcal{X} \to \mathcal{M}$. Let $\mathcal{V}$ denote a set of base verifiers, which are mappings $\mathcal{X} \times \mathcal{M} \to \{0, 1\}$. The verifier chooses a policy $v$ which computes a convex combination of elements of $\mathcal{V}$. (The verifier's output can be interpreted as the probability it assigns to the label being $1$).  The verifier and prover try to minimize the loss functions defined in Equations \ref{eq:vloss} and \ref{eq:ploss} respectively. We assume that both $\mathcal{P}$ and $\mathcal{V}$ include every constant mapping (i.e.~ones which don't depend on the input).
    
    Assume that there exists a deterministic verifier policy $v^* \in \mathcal{V}$ such that: 
    \begin{align}
        &\exists p \in C_p \colon \; \forall x \in \mathcal{X} \colon \; v^*(x, p(x)) = f(x)  \;\; \text{(completeness)} \label{eq:adpcompleteness}\\
        &\forall p \in C_p \colon\; \forall x \in \mathcal{X} \colon \; v^*(x, p(x)) = 1 \implies f(x) = 1 \;\; \text{(soundness) \label{eq:adpsoundness}}
    \end{align}
\end{definition}

For instance, to represent a decision problem from complexity theory, we could take $\mathcal{X}$ and $\mathcal{M}$ to be sets of binary strings, $\mathcal{P}$ to be the set of all mappings $\mathcal{X} \to \mathcal{M}$ (the prover is typically computationally unrestricted), and $\mathcal{V}$ the set of mappings implementable by boolean circuits with some maximum size. However, we keep the formalism general because our main results don't depend on the details of boolean circuits or complexity theoretic assumptions.

\begin{table}[t]
\caption{The summary of our theoretical results on the desirability of equilibria of different PVG formulations. The ``ordering" column illustrates in which order the prover and verifier pick their strategies (weights) and the problem instance (the input and labels) are revealed. The ``necessity" and ``sufficiency" columns show whether having converged to an equilibrium is necessary and/or sufficient for having discovered and complete and sound proof-verification protocol. The ``failure mode" column describe in what way some of the setups lead to undesirable equilibria. The ``convergence" column specifies if gradient descent converges to the eqilibrium on the Binary Erasure Channel Task. ($^*$)We've only shown local convergence for the verifier-leading equilibria. }
\label{tbl:desirable_eq}
\resizebox{\textwidth}{!}{%
\begin{tabular}{l|cc|l|c}
\toprule
\textbf{Ordering}  & \textbf{Necessity} & \textbf{Sufficiency} & \textbf{Failure Mode} & \textbf{Convergence}  \\ \midrule
\{Prover, Verifier\}, Instance & \cmark & \xmark & coordination problem & \cmark \\
Instance, \{Prover, Verifier\} & \xmark & \xmark & trivial verifier & \xmark \\ \hline
Verifier, Prover, Instance & \cmark & \cmark & none & \cmark$^*$ \\
Verifier, Instance, Prover & \cmark & \cmark & none & \cmark$^*$ \\
Instance, Verifier, Prover & \xmark & \xmark & trivial verifier & \xmark \\ \hline
Prover, Verifier, Instance & \xmark & \xmark & flood-the-zone & \xmark \\
Prover, Instance, Verifier & \xmark & \xmark & trivial verifier & \xmark \\
Instance, Prover, Verifier & \xmark & \xmark & trivial verifier & \xmark \\
\bottomrule
\end{tabular}}
\vspace{-0.3cm}
\end{table}

The conclusions of our theoretical analysis are displayed in Table \ref{tbl:desirable_eq}. Proper proof-verification protocols constitute equilibria of only three of the eight game formulations: the simultaneous setup where the problem instance is presented after the agents' moves, and the two verifier-first sequential setups where the problem instances are revealed after the verifier's move. We justify these claims through the following theorems. The proofs are presented in Appendix \ref{app:eq_proofs}.

\begin{restatable}{proposition}{trivialverifier}
\label{eq:trivialverifier}
Any PVG formulation in which the verifier is given the problem instance before it selects its strategy has bad equilibria --- that is, having converged to an equilibrium is neither necessary nor sufficient for having found a complete and sound proof verification protocol. 
\end{restatable}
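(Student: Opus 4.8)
The plan is to exploit the one feature common to the four formulations in the scope of the statement --- ``Instance, \{Prover, Verifier\}'', ``Instance, Verifier, Prover'', ``Prover, Instance, Verifier'' and ``Instance, Prover, Verifier'' --- namely that the verifier fixes its parameters $\vparams$ \emph{after} the instance $(x,y,y')$, hence the true label $y=f(x)$, has been revealed to it. The central lemma I would prove is: for every prover policy, the constant verifier policy $c_{f(x)}$ that puts all its mass on the revealed label and ignores the message attains $\loss_v = 0$ (the global minimum) and is therefore a verifier best response; since every constant mapping lies in $\mathcal V$ by Definition~\ref{def:abs_dec}, the profile consisting of this $c_{f(x)}$ and an \emph{arbitrary} prover policy is an equilibrium of the game --- a Nash equilibrium in the simultaneous case, a verifier- or prover-leading Stackelberg equilibrium in the sequential cases --- because once the verifier's output no longer depends on $z$ the prover's loss $\E[-\log p_v(y' \mid x, z)]$ does not depend on the prover's move.

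\emph{Non-sufficiency.} Take an instance with $f(x_0)=1$ (the case $f(x_0)=0$ is symmetric, and a constant $f$ is the degenerate case) and the equilibrium of the lemma in which the verifier plays the constant policy $c_1 \equiv 1$. This verifier is not a complete and sound protocol: on any input with $f(x)=0$ it still outputs $1$, so soundness (Eq.~\ref{eq:adpsoundness}) fails, and it can never output $0$, so completeness (Eq.~\ref{eq:adpcompleteness}) fails. Equivalently, when the instance is revealed first the verifier's equilibrium \emph{strategy} is the instance-indexed table $x \mapsto c_{f(x)}$, which is not a single element of $\mathrm{conv}(\mathcal V)$ and hence not a deployable verification protocol at all. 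So being at an equilibrium does not entail having found a complete and sound protocol.

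\emph{Non-necessity.} Start from the good verifier $v^* \in \mathcal V$ guaranteed by Definition~\ref{def:abs_dec}. In the three orderings in which the prover does not move strictly after the verifier (``Prover, Instance, Verifier'', ``Instance, Prover, Verifier'', ``Instance, \{Prover, Verifier\}''), let the prover play a constant policy emitting a fixed message $m_0$ with $v^*(x_0, m_0) \neq f(x_0)$ for some $x_0$; such a pair $(x_0, m_0)$ exists unless $v^*$ ignores its message entirely, i.e.\ the task is trivial. Against this prover $v^*$ is \emph{not} a verifier best response --- deviating to $c_{f(x_0)}$ strictly lowers $\loss_v$ at $x_0$ --- so the profile with prover policy $m_0$ and verifier policy $v^*$ has the verifier running a complete and sound protocol yet is not an equilibrium. (The game's equilibria, meanwhile, include the trivial memorizing profiles of the lemma.) For the remaining ordering, ``Instance, Verifier, Prover'', the verifier leads and so $v^*$ can in fact be \emph{an} equilibrium --- against a best-responding prover it attains zero loss on every instance, whether $f(x)=0$ or $1$, by soundness and completeness respectively --- and here non-necessity is instead carried by the coexistence of the trivial equilibrium $c_{f(x)}$ from the lemma, which is an equilibrium distinct from $v^*$ and is not a protocol; so the ``uniqueness of equilibria'' convention (Footnote~\ref{ft:unique}) already breaks in this formulation, and ``being at the equilibrium'' cannot be equated with ``having a good protocol''.

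The step I expect to be the main obstacle is exactly this last one: making ``non-necessity'' precise for ``Instance, Verifier, Prover'' (and, more mildly, for ``Instance, \{Prover, Verifier\}''), where the genuine protocol $v^*$ is not knocked off equilibrium by a profitable deviation but simply fails to be singled out among a family of equilibria that also contains trivial memorizing verifiers. I would resolve this by committing up front to reading ``found a complete and sound protocol'' as a property of the verifier's strategy while ``converged to an equilibrium'' is a property of the whole profile, and then observing that memorizing equilibria are always present and are never protocols, so convergence to an equilibrium carries no information about whether a protocol has been found.
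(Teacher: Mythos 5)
Your core idea is exactly the paper's: the verifier that has already seen the instance can play the constant policy returning $f(x)$ (available since $\mathcal V$ contains all constant mappings), achieves zero loss, and renders the prover's move irrelevant --- the ``trivial verifier'' failure mode. Be aware, though, that the paper never actually writes out a proof of this proposition (the appendix omits it; only the one-sentence remark after the statement justifies it), so your write-up is strictly more detailed than anything in the paper, and the non-sufficiency half is clean and correct. The delicate part is the one you flag yourself: non-necessity. For most of these orderings the good pair $(p^*, v^*)$ arguably \emph{is} still an equilibrium (the verifier already attains zero loss by completeness and soundness, so it has no strict incentive to switch to the memorizer; the prover is likewise at a best response), so the claim that equilibrium is ``not necessary'' cannot be established by showing the good pair gets knocked off equilibrium --- your $(m_0, v^*)$ profile only works because you pair the sound verifier with a non-best-responding prover, which is a weaker reading of ``having found a protocol'' than the paper uses in its other proofs (e.g.\ the necessity direction of Theorem~\ref{thm:verleading} takes the good \emph{pair}). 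Your fallback --- that non-necessity is carried by the coexistence of the memorizing equilibrium, so that ``being at an equilibrium'' under the uniqueness convention of Footnote~\ref{ft:unique} cannot be identified with ``having a good protocol'' --- is really a restatement of non-sufficiency, and you are candid about this. Since the paper supplies no argument for non-necessity either, this is not a gap relative to the paper, but it is worth stating explicitly that the non-necessity claim rests on the non-uniqueness of equilibria rather than on a profitable deviation from the sound protocol.
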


If the instance $x$ is known when the verifier chooses its policy, then the verifier can simply choose the constant policy which returns $f(x)$. We dub this the ``trivial verifier" failure mode.

\begin{restatable}{thm}{proverleading}
Consider all prover-leading sequential PVG formulations. Having reached a (prover leading Stackelberg) equilibrium is neither necessary nor sufficient for having found a complete and sound proof-verification protocol. 
\end{restatable}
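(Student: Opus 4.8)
The plan is to split the three prover-leading formulations according to when the verifier fixes its weights. In ``Prover, Instance, Verifier'' and ``Instance, Prover, Verifier'' the instance $x$ is revealed before the verifier commits, so both are special cases of the hypothesis of Proposition~\ref{eq:trivialverifier}, and the claim for them follows immediately from it: the ``trivial verifier'' that hard-codes $x \mapsto f(x)$ gives an equilibrium that is not a complete--sound protocol (sufficiency fails), while an honest complete--sound protocol is not a best response once the verifier is unconstrained in this way (necessity fails). So the genuinely new case is ``Prover, Verifier, Instance'', in which the prover commits first as Stackelberg leader, the verifier commits second, and only then is $x$ drawn.

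For that case I would fix one abstract decision problem (Definition~\ref{def:abs_dec}) and exhibit on it both a bad equilibrium and a good non-equilibrium. Choose the problem so that (i) it carries the sound-and-complete verifier $v^*$ promised by Definition~\ref{def:abs_dec} together with an honest prover $p_0$ witnessing completeness, and (ii) \emph{unaided} --- given $x$ but no informative message --- no verifier in the convex hull of $\mathcal V$ beats the constant predictor whose value is the base rate $r \coloneqq \mathbb P_X(f(x)=1) \in (0,1)$ (this is exactly the premise that the verifier needs the prover, and holds for any sufficiently hard language). The crux is to compute the prover's leader value $\min_{p}\, \loss_p\big(p, \hat v(p)\big)$, where $\hat v(p) \coloneqq \argmin_{v}\loss_v(p,v)$ is the follower's best response (identified, by strict convexity of $-\log$, with the posterior mean of $f$ given $(x,z)$). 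Since the prover argues for label $1$ on \emph{every} instance while the verifier is scored for matching $f$, they conflict precisely on the negative instances: any message content that lets $\hat v(p)$ identify a negative instance forces its output there toward $0$ and strictly raises $\loss_p$. Hence the prover's optimal leader strategy is to ``flood the zone'': send messages statistically independent of $f$, e.g.\ a constant message, which lies in $\mathcal P$ by assumption. Against such a policy $\hat v$ is the constant $r$, the prover attains the optimal value $\loss_p = -\log r$, and the pair consisting of the flooding policy and the constant-$r$ verifier is a prover-leading Stackelberg equilibrium whose verifier is essentially constant --- hence not complete when $r<1$ (no prover can make it output $1$), or not sound when it leans positive, and in no case both; \textbf{sufficiency fails}. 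Conversely $(p_0, v^*)$ is a complete-and-sound protocol and $v^*$ is a best response to $p_0$ (loss $0$), yet it is \emph{not} an equilibrium: against $v^*$ the prover's loss equals $\E_{x\,:\,f(x)=0}\big[-\log v^*(x,p_0(x))\big] = \E_{x\,:\,f(x)=0}[-\log 0] = +\infty$ (or, with a smoothed verifier, arbitrarily large), so it strictly prefers the flooding deviation; \textbf{necessity fails}.

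The step I expect to be the real obstacle is the leader-optimality claim --- that flooding minimizes $\loss_p(p,\hat v(p))$ over \emph{all} prover policies, not merely beats the honest one. Making this airtight requires (a) pinning the decision problem down concretely enough that the unaided-capability premise (ii) genuinely holds; (b) handling the verifier's convex-combination output under the $-\log$ loss, so that the identification ``$\hat v(p) = $ posterior mean of $f$ given $(x,z)$'' is justified and lies in the admissible set; and (c) turning ``information about $f$ can only help the verifier on negatives'' into a clean monotonicity inequality $\loss_p(p,\hat v(p)) \ge -\log r$ valid for every $p$. A minor bookkeeping point is to record, as $r$ ranges over $(0,1)$, which clause of ``not a complete-and-sound protocol'' the equilibrium verifier violates, so that the sufficiency failure holds uniformly.
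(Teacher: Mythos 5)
Your overall route is the same as the paper's: exhibit the ``flood-the-zone'' pair as an equilibrium that is not complete and sound (sufficiency fails), and show the honest pair is not an equilibrium because the best-responding verifier drives the prover's loss to $+\infty$ on negatives while flooding achieves the finite value $-\log r$ (necessity fails). Your necessity half is complete as stated, and your dispatch of the two orderings in which the instance precedes the verifier via Proposition~\ref{eq:trivialverifier} is a clean decomposition the paper doesn't spell out (the paper instead uses base verifiers that ignore $x$, so one example covers all three orderings at once).

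The genuine gap is the one you flag yourself, and it sits squarely in the sufficiency half: to certify that flooding paired with the constant-$r$ verifier is a prover-leading Stackelberg equilibrium, you must prove $\loss_p\big(p,\hat v(p)\big)\ge -\log r$ for \emph{every} $p\in\mathcal P$, and your route to this is the identification of $\hat v(p)$ with the posterior mean of $f$ given $(x,z)$ followed by Jensen. But that identification is precisely what fails in the regime the theorem is about: your own premise (ii) forces the convex hull of $\mathcal V$ to be too poor to realize the posterior in general (otherwise the unaided verifier would beat the constant $r$), so $\hat v(p)$ is only a KL-projection of the posterior onto $\mathrm{conv}(\mathcal V)$, and Jensen no longer delivers the bound. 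Whether a prover can exploit a constrained best response to push $\E[\log\hat v(p)]$ above $\log r$ depends on the joint geometry of $\mathcal P$ and $\mathrm{conv}(\mathcal V)$ and does not follow from Definition~\ref{def:abs_dec} plus premise (ii); it would need a separate argument. The paper avoids the issue entirely by choosing a single concrete instance (a simplified binary erasure channel with $\mathcal X=\{0,1\}$, $\mathcal M=\{0,1,2\}$, a two-element prover policy set and three input-independent base verifiers), so leader-optimality reduces to comparing the honest prover's loss of $+\infty$ against the flooding prover's loss of $-\log\tfrac12$. Since Definition~\ref{def:abs_dec} permits $\mathcal P$ and $\mathcal V$ to be any sets containing the constants, and the theorem only requires a counterexample, you can close your gap the same way: shrink $\mathcal P$ to the honest policy plus a constant policy and verify leader-optimality by direct enumeration, rather than proving the general monotonicity inequality.
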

The intuition behind this result is as follows: no matter what strategy the prover picks as the leader, the verifier can always find a strategy whose outputs match the label marginals (i.e. at least $50\%$ accuracy if the labels are balanced). Due to the algebraic form of the prover loss function, the best the prover can do is to avoid giving the verifier any information about the information whatsoever about the label. This defective incentive prevents the learning of proper proof-verification protocols. We dub this the ``flood-the-zone" failure mode, in reference to the media disinformation strategies that pollute the information ecosystem to prevent people from knowing what's true~\cite{floodthezone}.

\begin{restatable}{thm}{verifierleading}
\label{thm:verleading}
Consider all verifier-leading sequential PVG formulations in which the problem instance is revealed after the verifier picks its strategy. Having reached a (verifier leading Stackelberg) equilibrium is both necessary and sufficient for having found a desirable proof-verification protocol. 
\end{restatable}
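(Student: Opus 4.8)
The plan is to remove the follower — the prover — from the game by substituting its best response, which collapses the verifier's Stackelberg problem into a per-instance minimization whose minimizers are exactly the complete-and-sound verifiers; the two halves of the stated equivalence then fall out of this reduction.

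First I would compute the prover's best response to an arbitrary verifier policy $v$. Since the prover always argues for $y'=1$ and $v(x,m)$ is the probability the verifier assigns to label $1$, the loss in \eqref{eq:ploss} becomes $\loss_p(v,p)=\E_{x\sim\mathbb{P}_X}[-\log v(x,p(x))]$, which decouples over inputs; hence a best response is $p_v(x)\in\argmax_{m\in\mathcal{M}}v(x,m)$, yielding verifier output $g_v(x):=\max_{m\in\mathcal{M}}v(x,m)$. (Here I use that $\mathcal{P}$ is rich enough to contain such per-instance maximizers — true in the intended instantiation where $\mathcal{P}$ is all of $\mathcal{X}\to\mathcal{M}$ — and that a prover indifferent among best responses breaks ties toward the arg-max, the limiting behaviour of the softmax-verifier regime used in practice.) Plugging $p_v$ and $y=f(x)$ into \eqref{eq:vloss} gives
\begin{equation}
\loss_v(v,p_v)=\E_{x\sim\mathbb{P}_X}\big[\,\mathbf{1}\{f(x)=1\}\,(-\log g_v(x))+\mathbf{1}\{f(x)=0\}\,(-\log(1-g_v(x)))\,\big],
\end{equation}
a nonnegative quantity that equals $0$ iff $g_v(x)=f(x)$ for $\mathbb{P}_X$-a.e.\ $x$.

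Next I would pin the Stackelberg value to $0$. The deterministic $v^*$ postulated in Definition~\ref{def:abs_dec} is available to the verifier (it lies in $\mathcal{V}$, hence in the set of convex combinations) and attains $\loss_v(v^*,p_{v^*})=0$: completeness \eqref{eq:adpcompleteness} forces $g_{v^*}=1$ on positives, and soundness \eqref{eq:adpsoundness} — using that $\mathcal{P}$ contains all constant maps and that $v^*$ is $\zeroone$-valued — forces $v^*(x,m)=0$ for \emph{every} $m$ on negatives, so $g_{v^*}=0$ there. As $\loss_v\ge 0$ always, the leader-optimal value is exactly $0$; thus a verifier $v$ (paired with $p_v$) is a verifier-leading Stackelberg equilibrium iff $g_v(x)=f(x)$ $\mathbb{P}_X$-a.e. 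The last step is to identify this with ``$v$ is a complete and sound (equivalently, desirable) proof-verification protocol'' in both directions. If $g_v=f$ a.e., then on negatives $v(x,m)=0$ for all $m$, so no prover elicits output $1$ and no false positive arises (soundness / guaranteed perfect precision), while on positives some message achieves output $1$ and the best-response prover $p_v$ selects it, so $v(x,p_v(x))=f(x)$ everywhere (completeness / possible perfect recall) — both desiderata hold. Conversely, if $v$ is complete and sound in the sense of Definition~\ref{def:abs_dec}, the same case analysis (completeness on positives, soundness plus the constant-map assumption on negatives) yields $g_v=f$ a.e. Together these give the equivalence ``$v$ desirable $\iff$ $(v,p_v)$ a Stackelberg equilibrium'', i.e.\ both the necessity and the sufficiency entries of Table~\ref{tbl:desirable_eq} for these rows. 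I would also note the argument does not depend on whether the instance is revealed between the two moves (``Verifier, Instance, Prover'') or only after both (``Verifier, Prover, Instance''): in each case the verifier commits to an $x$-independent policy, the prover's effective strategy set is the maps $\mathcal{X}\to\mathcal{M}$ whether chosen up front or after seeing $x$, and the losses \eqref{eq:vloss}--\eqref{eq:ploss} are expectations over $\mathbb{P}_X$ — so the Stackelberg analysis transfers verbatim.

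I expect the main obstacle to lie not in the computation but in the careful matching of the game-theoretic condition $g_v=f$ with the logical completeness/soundness conditions: one must handle the degenerate inputs where $v$ outputs $0$, at which the prover's loss is $+\infty$ and \emph{every} prover is an equal-loss best response (so the tie-breaking convention introduced above is doing real work), and one must observe that an equilibrium verifier is forced to be extremal only on negatives and at the prover-chosen message on positives, yet this still delivers the full desiderata precisely because the protocol bundles in the prover $p_v$. A second point to emphasize is that the whole characterization rests on the Definition~\ref{def:abs_dec} hypothesis that some complete-and-sound $v^*$ exists: this is exactly what forces the Stackelberg value to be $0$, and without it one side of the equivalence would be vacuous.
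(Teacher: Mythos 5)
Your proof is correct and follows essentially the same route as the paper's: both arguments hinge on the observation that the postulated complete-and-sound $v^*$ pins the verifier's leader-optimal value at zero (since the best-responding prover supplies correct proofs), so a verifier strategy is a verifier-leading Stackelberg equilibrium iff it attains zero loss, which holds iff the protocol is complete and sound. Your write-up is more explicit than the paper's — you substitute the prover's best response $p_v(x)\in\argmax_{m}v(x,m)$ to collapse the game to a single-agent minimization and you flag the tie-breaking subtlety on inputs where the verifier outputs $0$ — whereas the paper argues necessity by direct verification at $(p_1,v_1)$ and sufficiency by an informal contrapositive, but the underlying ideas coincide.
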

Necessary is easy to argue: given a complete and sound protocol, neither player has an incentive to deviate. Sufficiency can be shown using a ``proof by contrapositive" argument: If the protocol represented by prover and verifier agents is not complete and sound, then they cannot possibly represent a verifier-leading Stackelberg equilibrium, because the verifier can do strictly better by switching to the complete and sound verification system (since the prover will react by producing correct proofs).

\begin{restatable}{thm}{nash}
\label{thm:nash}
Consider the simultaneous PVG formulation where the problem instance is revealed after the agents pick their strategies. Having converged to a (Nash) equilibrium is necessary for having found a complete and sound proof-verification protocol. 
\end{restatable}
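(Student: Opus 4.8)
The theorem asserts \emph{necessity} only, so the plan is to show that if the prover and verifier strategies realise a complete and sound protocol, then they form a Nash equilibrium of the simultaneous game in which the instance is drawn \emph{after} both players commit. I will take ``realising a complete and sound protocol'' to mean: the verifier plays the deterministic policy $v^* \in \mathcal{V}$ of Definition~\ref{def:abs_dec} that satisfies completeness (\ref{eq:adpcompleteness}) and soundness (\ref{eq:adpsoundness}), and the prover plays a policy $p^* \in C_p$ witnessing completeness, i.e.\ $v^*(x, p^*(x)) = f(x)$ for all $x \in \mathcal{X}$. Both are admissible strategies: $v^*$ is a single base verifier and hence a (degenerate) convex combination, and $p^* \in C_p$, which I take to be the prover's admissible set — the same set over which the soundness quantifier and the best-response comparison below range. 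By the definition of Nash equilibrium it then remains to verify the two best-response conditions, which I would do as follows.

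\textbf{Verifier side.} Plugging the completeness witness into (\ref{eq:vloss}) gives $\loss_v(p^*, v^*) = \E_{x\sim\mathbb{P}_X}\!\big[-\log p_{v^*}(f(x)\mid x, p^*(x))\big]$, and since $v^*$ is deterministic with $v^*(x,p^*(x)) = f(x)$, each term is $-\log 1 = 0$, so $\loss_v(p^*, v^*) = 0$. Because $\loss_v$ is always a nonnegative expectation of $-\log$ of probabilities, $0$ is the global minimum of $v \mapsto \loss_v(p^*, v)$, so $v^*$ is a best response to $p^*$.

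\textbf{Prover side.} As the prover argues $y' = 1$, (\ref{eq:ploss}) reads $\loss_p(p, v^*) = \E_{x\sim\mathbb{P}_X}\!\big[-\log p_{v^*}(1\mid x, p(x))\big]$, and I would split this over $\{x : f(x) = 1\}$ and $\{x : f(x) = 0\}$. On negative instances the contrapositive of soundness (\ref{eq:adpsoundness}) forces $v^*(x, p(x)) = 0$ for \emph{every} $p \in C_p$, so that part of the loss is a prover-independent constant; on positive instances $p^*$ attains $v^*(x, p^*(x)) = 1$, contributing $-\log 1 = 0$, the least possible value, whereas any other prover contributes a nonnegative amount there. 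Hence $\loss_p(p^*, v^*) \le \loss_p(p, v^*)$ for all $p \in C_p$, so $p^*$ is a best response to $v^*$, and together with the verifier side $(p^*, v^*)$ is a Nash equilibrium.

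\textbf{Where the difficulty lies.} The one subtle point is the degeneracy of the cross-entropy loss on negative instances, where $-\log p_{v^*}(1 \mid x, \cdot) = -\log 0$ is infinite; I handle it above by noting that soundness makes this term entirely prover-independent, so comparing provers reduces to the positive-instance contribution (alternatively, one may replace $-\log$ by a bounded convex increasing surrogate, as the paper permits, making the comparison strict against provers that fail to elicit $v^* = 1$ on positives). I would also emphasise in the write-up that the ``instance-after-moves'' ordering is exactly what makes this argument legitimate: the objectives are expectations over $\mathbb{P}_X$ of committed policies $v \colon \mathcal{X} \times \mathcal{M} \to [0,1]$, not quantities the verifier could retune per instance — otherwise the ``trivial verifier'' of Proposition~\ref{eq:trivialverifier} would intrude. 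Finally, I would resist any attempt to upgrade this to an equivalence: sufficiency genuinely fails (Table~\ref{tbl:desirable_eq}), since an uninformative prover paired with a label-marginal-matching verifier is also a Nash equilibrium — the ``coordination problem'' failure mode.
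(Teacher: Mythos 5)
Your proposal is correct and follows essentially the same route as the paper's proof: completeness makes $(p^*,v^*)$ give the verifier zero loss (so it won't deviate), and soundness plus completeness mean the prover already does as well as possible on positive instances and cannot influence the verifier on negative ones (so it won't deviate either). Your explicit handling of the $-\log 0$ degeneracy on negative instances is a small but welcome refinement that the paper's own argument glosses over; it does not change the structure of the proof.
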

If the prover and verifier already represent a complete and sound proof-verification protocol, the verifier's loss function is already minimized, hence the verifier won't change its strategy. Because the prover can already convince the verifier when it's acting collaboratively (due to completeness) and cannot fool it when it's acting adversarially (due to soundness), it also won't have the incentive to change its strategy. Therefore, the prover-verifier strategies constitute Nash equilibria. This also implies that all Stackelberg equilibria of the proof-verification game are also Nash equilibria: 
\begin{restatable}{corollary}{equivalentequilibria}
All verifier-leading Stackelberg equilibria are Nash equilibria of PVG. 
\end{restatable}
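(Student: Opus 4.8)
The plan is to derive the corollary by composing Theorem~\ref{thm:verleading} with Theorem~\ref{thm:nash}, using the notion of a complete and sound proof-verification protocol as the bridge. Concretely, let $(\vparams^*,\pparams^*)$ be a verifier-leading Stackelberg equilibrium, i.e.\ a Stackelberg equilibrium of one of the two verifier-leading sequential PVGs in which the instance is revealed after the verifier commits (the rows \{Verifier, Prover, Instance\} and \{Verifier, Instance, Prover\} of Table~\ref{tbl:desirable_eq}). By the sufficiency half of Theorem~\ref{thm:verleading}, this pair represents a desirable --- that is, complete and sound --- protocol. Theorem~\ref{thm:nash} then says exactly that any pair representing a complete and sound protocol is a Nash equilibrium of the simultaneous PVG in which the instance is revealed after both moves. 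Chaining the two implications, verifier-leading Stackelberg equilibrium $\Rightarrow$ complete and sound protocol $\Rightarrow$ Nash equilibrium, which is the claim.

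The one point I would make explicit is that it is legitimate to feed the same weight vector $(\vparams^*,\pparams^*)$ into both theorems: the verifier-leading sequential formulation and the simultaneous formulation differ only in the order of the moves and in when the instance is revealed, but they share the strategy sets $\mathcal{P}$ and $\mathcal{V}$ and the loss functions $\loss_v$, $\loss_p$ of Equations~\ref{eq:vloss}--\ref{eq:ploss}; hence ``the agents represent a complete and sound protocol'' is a property of the pair of strategies alone, not of the game's timing. For a reader who prefers to see it without routing through Theorem~\ref{thm:nash}, I would also include the short direct check: at a verifier-leading Stackelberg equilibrium the prover (follower) already best-responds to $\vparams^*$ by definition, so Nashness only requires that $\vparams^*$ best-responds to $\pparams^*$; but completeness forces the equilibrium prover to hand the verifier messages on which $v^*$ reproduces $f(x)$ exactly, so $\loss_v$ is already at its global minimum and the verifier has no profitable unilateral deviation.

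I do not expect a genuine obstacle here --- the mathematical content is carried entirely by the two preceding theorems, and what remains is bookkeeping: matching up which of the eight formulations each equilibrium concept belongs to, and invoking the uniqueness-of-equilibria convention (footnote~\ref{ft:unique}) so that ``the'' Stackelberg equilibrium and ``the'' Nash equilibrium are pinned to the same point. The only thing one should be slightly careful about is not over-claiming the converse: a Nash equilibrium of the simultaneous PVG need not be a verifier-leading Stackelberg equilibrium (Theorem~\ref{thm:nash} gives only necessity, not sufficiency), so the corollary is genuinely one-directional.
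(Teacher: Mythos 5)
Your proposal follows exactly the paper's own argument: apply the sufficiency direction of Theorem~\ref{thm:verleading} to conclude that a verifier-leading Stackelberg equilibrium yields a complete and sound protocol, then apply Theorem~\ref{thm:nash} (complete and sound implies Nash) to finish. The extra remarks about shared strategy sets and the optional direct check are sound but not needed; the core chain of implications is the same as in the paper.
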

There might exist Nash equilibria that don't correspond to complete and sound protocols. For instance, the case in which the verifier completely ignores the prover constitutes a Nash equilibrium. We call this the ``coordination problem" failure mode: no player can adopt a protocol unilaterally. 

\subsection{Finding Proof-Verification Strategies Using Gradient-Based Optimizers}
\label{sec:find_with_grad}
Section \ref{sec:desirable_eq} establishes that there exist PVG formulations whose equilibria, if found, will constitute complete and sound proof-verification protocols. Is there hope in finding these equilibria using gradient based differentiable game optimizers? We thereon use the phrase  ``Prover-Verifier Training" to refer to the general method of using gradient-based optimizers to solve for PVG equilibria. 

To address the question above, we turn to a simpler problem we call ``Binary Erasure Channel" (BEC) which admits detailed analysis. The BEC problem is defined as follows:
\begin{definition}\label{def:bec}
    We have a prover-verifier system. The prover takes in input $x \in \{0, 1\}$ and attempts to convince the verifier to output $1$ all the time whatever the input is. The verifier aims at predicting the input correctly only based on the message from the prover. The prover defines a stochastic function $f: \{0, 1\} \rightarrow \{0, 1, ..., K\} \triangleq \mathcal{M}$ and is parameterized by probability vectors $p^0 = [p_0^0, ..., p_K^0]$ and $p^1 = [p_0^1, ..., p_K^1]$ while the verifier defines $g: \mathcal{M} \rightarrow \{0, 1\}$ and is parameterized by $q \in \real^{K+1}$ with $q_i$ representing the probability of predicting $1$ when receives token $i \in \mathcal{M}$. 
\end{definition}

We impose some further restrictions: the prover cannot send the message $1$ when $x = 0$ and cannot send $0$ when $x = 1$. With this restriction, the communication channel between the prover and verifier resemble the \textit{binary erasure channel}~\cite{cover1999elements} with the tokens $\{2, ..., K\}$ representing multiple erasure tokens. This setup is a simplification of the more general assumption that there exists a complete and sound verification protocol that is realizable by the prover and verifier. In this setting (given loss functions defined in Eq. \eqref{eq:vloss} and \eqref{eq:ploss}), the joint strategy of prover sending the right proof with $p_1^1 = 1$ and verifier proceeding with $q_1 = 1$ but $q_i = 0$ for all $i \neq 1$ is a Nash equilibrium. 

Moreover, we add entropy regularization (i.e., label smoothing) to the verifier loss to avoid degenerate cases in training. We can show that running standard gradient descent converges to the ideal joint strategy under the simultaneous game formulation.
In more detail, we have the following theorem.

\begin{restatable}{thm}{sim}\label{thm:sim-gd}
Starting from any initialization with $p_i^0 > 0, p_i^1 > 0$ for all $i \in \mathcal{M}$, running gradient descent on both prover and verifier with reasonable learning rates, the prover would converge to only sending message $1$ when $x = 1$ while the verifier would predict $1$ only if it receives message $1$ from the prover. To put it differently, $p_1^1 = 1$ when converge while $q_1 = 1 - \epsilon$ and $q_0 = q_2 = ... = q_K = \epsilon$ with $\epsilon$ a small constant depending on the strength of entropy regularization. 
\end{restatable}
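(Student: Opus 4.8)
The plan is to analyze the coupled gradient-descent trajectory directly, exploiting the special structure of the erasure channel rather than treating it as a generic two-player game. Assume for concreteness a uniform prior on $x$ (nothing below depends on this) and write $\epsilon$ for the label-smoothing strength. The verifier's (label-smoothed) loss is
\[
\loss_v \;=\; \tfrac12\sum_{m\in\mathcal{M}} p_m^0\big[-(1-\epsilon)\log(1-q_m)-\epsilon\log q_m\big] \;+\; \tfrac12\sum_{m\in\mathcal{M}} p_m^1\big[-(1-\epsilon)\log q_m-\epsilon\log(1-q_m)\big],
\]
and the prover's loss is $\loss_p=\tfrac12\sum_m p_m^0(-\log q_m)+\tfrac12\sum_m p_m^1(-\log q_m)$, with the channel constraints $p_1^0\equiv 0$ and $p_0^1\equiv 0$ hard-wired. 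First I would record two per-coordinate facts: $\partial_{q_m}\loss_v$ has the same sign as $q_m-q_m^\star(p)$, where $q_m^\star(p)=(1-\epsilon)-(1-2\epsilon)\tfrac{p_m^0}{p_m^0+p_m^1}\in[\epsilon,1-\epsilon]$; and the prover's update (with the prover parametrised on the simplices and using projected gradient descent, or equivalently in logit space) moves probability toward the message(s) of largest $q_m$ among $\{1,\dots,K\}$ for $p^1$, and among $\{0,2,\dots,K\}$ for $p^0$.

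The heart of the argument is two \emph{structural decouplings}. Because message $1$ is never sent when $x=0$ (the constraint $p_1^0\equiv0$), the only term of $\loss_v$ involving $q_1$ is $\tfrac12 p_1^1[-(1-\epsilon)\log q_1-\epsilon\log(1-q_1)]$, whose gradient points toward $q_1=1-\epsilon$ for \emph{every} $p_1^1>0$; hence, with a small enough verifier step size, $q_1$ moves monotonically toward $1-\epsilon$, at a rate bounded below as long as $p_1^1$ is bounded below. Symmetrically, $p_0^1\equiv0$ makes the verifier's gradient drive $q_0$ toward $\epsilon$. For an erasure token $m\ge2$, $q_m^\star(p)<1-\epsilon$ strictly whenever $p_m^0>0$, and I would show $p_m^0$ stays bounded away from $0$: on input $x=0$ the prover strictly prefers any erasure token to token $0$, because $q_m>\epsilon$ whenever $p_m^1>0$ (a property preserved along the trajectory) while $q_0\to\epsilon$, so the prover never drains all mass out of $\{2,\dots,K\}$ when $x=0$. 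Consequently every $q_m$, $m\ge2$, stays below $1-\epsilon$ by a fixed margin, so once $q_1$ is within that margin of $1-\epsilon$ the prover's gradient \emph{strictly} increases $p_1^1$; a potential argument — e.g.\ tracking $1-p_1^1$ together with $\|q-q^\star(p)\|$ — then yields $p_1^1\to1$. From $p_1^1\to1$ we get $p_m^1\to0$ for all $m\ge2$, hence the posterior $\Pr(y=1\mid m)\to0$, hence $q_m^\star(p)\to\epsilon$ and $q_m\to\epsilon$; combined with $q_1\to1-\epsilon$ and $q_0\to\epsilon$ this is exactly the claimed equilibrium, and one checks it is a fixed point since both gradients vanish there.

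The main obstacle is that in the \emph{simultaneous} game one cannot simply substitute the verifier's best response $q^\star(p)$: all the monotonicity claims above must be established along the true joint trajectory. The cleanest route I see is a quasi-static / two-timescale analysis — choose the learning rates (this is what ``reasonable'' should be taken to mean: a strong-convexity/Lipschitz bound for $\loss_v$ in $q$ on compact subsets of the open simplex, together with a bound on $\|\nabla_p q^\star(p)\|$) so that $q$ remains inside a neighborhood of $q^\star(p)$ whose radius can be made as small as needed relative to the margins above, and then carry the ``$q\neq q^\star$'' error terms through every inequality. It may be cleanest to do this first for the continuous-time gradient flow and then transfer to gradient descent by a discretization argument. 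A secondary subtlety, which the ``$p_m^0$ bounded below'' step is designed to kill, is ruling out a degenerate limit in which some erasure token $m$ has $p_m^0\to0$ while $p_m^1\not\to0$, which would pull $q_m$ up to $1-\epsilon$ and create a spurious tie with $q_1$; I would prove that step first so it is available throughout, and track the rarely-used coordinates (such as $p_0^0$) carefully enough to pin down the limiting values of the corresponding verifier outputs.
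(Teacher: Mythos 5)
Your proposal is correct and follows essentially the same route as the paper's proof: a two-timescale (quasi-static) argument in which the verifier's label-smoothed best response $q^\star(p)$ is substituted into the prover's objective, and the sign of the resulting prover gradient is shown to drive $p_1^1$ monotonically to $1$, after which the verifier settles at $q_1 = 1-\epsilon$ and $q_m = \epsilon$ for $m \neq 1$. The only substantive difference is that the paper collapses the erasure tokens $2,\dots,K$ into a single token ``WLOG'' and works with the two scalars $a = p_1^1$, $b = p_0^0$, whereas you keep all $K$ coordinates and explicitly flag (and propose to close) the two points the paper's reduction glosses over --- possible asymmetry of mass among erasure tokens at initialization, and the gap between $q$ and $q^\star(p)$ along the true simultaneous trajectory.
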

Formulating the PVG as a sequential game in which the prover agent plays first leads to undesirable solutions. We summarize the the result below. 
\begin{restatable}{thm}{pfirst}\label{thm:seq-gd}
Starting from any initialization with $p_i^0 > 0, p_i^1 > 0$ for all $i \in \mathcal{M}$, running gradient descent on both prover and verifier with any learning rates, prover would converge to only sending message $2$ to $K$ no matter what the input is.
\end{restatable}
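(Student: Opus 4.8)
The plan is to analyze the Stackelberg dynamics by first working out the verifier's best response to any fixed prover strategy, then substituting that best response into the prover's loss and showing the resulting "effective" prover objective drives all probability mass onto the erasure tokens $\{2,\dots,K\}$. Concretely, I would first fix the prover's probability vectors $p^0, p^1$ and the marginal $\Pr[x=1]=\pi$ (taken to be, say, $1/2$ under a balanced instance distribution), and compute the verifier's optimal $q$ by minimizing $\loss_v$. Because the channel restriction forces message $0$ to come only from $x=0$ and message $1$ only from $x=1$, receiving token $0$ or token $1$ is perfectly informative, so the (entropy-regularized) verifier best response is $q_0=\epsilon$, $q_1=1-\epsilon$, and for each erasure token $i\in\{2,\dots,K\}$, $q_i$ is the posterior-weighted optimum, i.e. a value strictly between $\epsilon$ and $1-\epsilon$ determined by the ratio $\pi p_i^1 \,/\, (\pi p_i^1 + (1-\pi) p_i^0)$. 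The key qualitative fact I need from this step is that the verifier's output on an erasure token is a continuous, strictly-between-$0$-and-$1$ function of the prover's parameters, and in particular that by sending \emph{only} erasure tokens the prover can make the verifier's output on \emph{every} realized token equal (in the balanced symmetric case) the prior $\pi$, which is the largest uniform value the prover can guarantee.

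Next I would plug this verifier best response back into $\loss_p = \E[-\log p_v(1\mid x,z)]$ and show that this "outer" objective is minimized exactly when $p_i^0 = p_i^1 = 0$ for $i\in\{0,1\}$, i.e. all mass on $\{2,\dots,K\}$. The intuition (already flagged in the excerpt as "flood-the-zone") is algebraic: sending an informative token $1$ when $x=1$ indeed makes $p_v(1\mid\cdot)$ large on those inputs, but it necessarily lets the verifier push $p_v(1\mid\cdot)$ small when it sees token $0$; since $-\log$ is convex, the penalty from the confident-wrong case outweighs the gain, so the prover prefers to keep the verifier maximally uncertain. I would make this precise by writing $\loss_p$ as a sum over tokens of $-\Pr[\text{token }i\text{ sent}]\log q_i(p)$ with $q_i(p)$ the best response above, and checking that any transfer of mass from $\{0,1\}$ onto the erasure block (holding the channel constraints) strictly decreases $\loss_p$ — this is the routine calculation I would not grind through here, but it reduces to a one-variable convexity/monotonicity check after using $\log q_1 = \log(1-\epsilon) \le 0$ is fixed and independent of $p$.

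Finally I would argue that gradient descent on this two-timescale system actually reaches this point. Since the theorem claims convergence "with any learning rates," I would treat the prover's gradient as being taken through the verifier's best-response map (the Stackelberg/total-derivative gradient), so the relevant flow is gradient descent on the effective prover objective $\tilde\loss_p(p) := \loss_p(p, q^*(p))$. I would show $\tilde\loss_p$ has no stationary points with positive mass on $\{0,1\}$ — the gradient in those coordinates is strictly positive there — and that the simplex constraints ($p^0,p^1$ probability vectors) are handled by the softmax parametrization so that the boundary $p_0=p_1=0$ is the unique attractor; starting from any interior initialization ($p_i^0,p_i^1>0$), the trajectory monotonically decreases $\tilde\loss_p$ and converges to this face. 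I expect the main obstacle to be rigor in the last step: controlling the coupled dynamics when the verifier is \emph{not} exactly at its best response (if one insists on literal simultaneous-looking updates rather than an exact inner argmin), and making the "any learning rate" claim clean — this likely requires either explicitly invoking the Stackelberg-gradient formulation (so the verifier is in closed form) or a Lyapunov argument showing $\tilde\loss_p$ decreases along the joint flow regardless of step size, which is why I would set up the proof around the closed-form $q^*(p)$ from step one rather than tracking $q$ as an independent trajectory.
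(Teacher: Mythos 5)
Your plan --- compute the verifier's closed-form (entropy-regularized) best response, substitute it to obtain an effective prover objective, and show that the total-derivative (Stackelberg) gradient with respect to the mass placed on tokens $0$ and $1$ is strictly positive so that this mass flows onto the erasure block --- is essentially the paper's proof, which (after a WLOG reduction grouping tokens $2,\dots,K$ into one) carries out your deferred ``one-variable check'' via the inequality $\log(1+x) < x$. One small correction to your stated intuition: in the best response $q_0$ and $q_1$ are constants independent of the prover, so the penalty for being informative does not come from the verifier's reaction on token $0$ but entirely from the dependence of the erasure-token posterior $q_2$ on the prover's parameters; it is that best-response term that flips the sign of $\partial \loss/\partial a$ relative to the simultaneous game.
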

\section{Related Work}
\vspace{-0.2cm}

\textbf{Computational Complexity:} Much of computational complexity theory is centered around various notions of proofs, from the basic definition of NP to more modern notions of proof such as interactive, zero-knowledge, and probabilistically checkable proofs~\cite{goldreich2008probabilistic}. These various notions of proof can be understood in terms of a game between a prover and verifier somewhat analogous to ours. Our game is most directly analogous to the vanilla notion of proof used to define NP. But given the right architecture, our game would allow the prover's message to encode the weights of a network that answers the verifier's questions, thereby more closely resembling an interactive proof.

\textbf{Interpretability:} Selective rationalization is the task of highlighting input features that are useful for classification. A number of game theoretic selective rationalization methods have been proposed~\citep{lei2016rationalizing, yu2019rethinking, chang2019game}. With text applications in mind, \cite{lei2016rationalizing} propose a two player, collaborative game where a generator network selects relevant features from the input text, and a predictor network does classification using those features. 
\citet{chang2019game} propose a six player game where the 4 generators select factual or counterfactual rationales, and the 2 discriminators aim to distinguish factual and counterfactual rationales. In practice, only 3 players are instantiated (two generators that only take negative or positive examples and a discriminator). 
Game theoretic approaches have also been used for interpretable fact-checking on knowledge graphs. \citet{hildebrandt2020reasoning} proposes solving the triplet classification task (deciding whether a subject-predicate-object triple is contained in a knowledge graph) using debate dynamics. Two reinforcement learning agents learn to generate arguments (paths in a knowledge graph) that either support or go against a fact (a triple). These arguments are then fed to a binary classifier that aims to ascertain the truth value of the fact. 

\textbf{Verifying correctness and functional properties of models:} There's a growing body of literature aimed at verifying both the correctness and/or functional properties of learned models. \citet{goldwasser2021interactive} investigate interactive proof systems to verify correctness. They show cases where PAC verification of hypotheses are significantly more sample efficient, even when using a single-round protocol. Works that aim to verify functional properties of models (such as robustness to adversarial perturbations) can be categorized under three groups \cite{kohli2019towards}: those that test consistency with specifications \cite{ruderman2018uncovering, uesato2018rigorous}, those that train specification consistent models~\cite{raghunathan2018certified, wong2018scaling, mirman2018differentiable, dvijotham2018training} and those that formally verify adherence to specifications~\cite{katz2017reluplex, dvijotham2018training, singh2018fast}.
These related works are orthogonal to ours, as we aim to learn proof-verification protocols.

\textbf{AI Safety:} Being able to verify the correctness of the outputs produced by powerful but potentially adversarial or brittle learning systems is a top objective of the AI Safety community. \citet{irving2018ai} propose a two player debate game where two agents communicate (space-constrained) statements to convince the judge of the correctness of opposing statements. Our approach is different in that it uses only a single debater (prover). We also differ significantly in our focus: while \citet{irving2018ai} believed the presence of adversarial debaters would itself lend robustness, we are more interested in using the system to discover proof protocols. We believe the latter philosophy captures the spirit of the adversarial debate game from the rationalist community \cite{barnes2020writeup}, which is intended to determine which forms of (human) argumentation lead most reliably to true conclusions. \citet{che2019deep} propose Deep Verifier Networks, which uses a conditional variational autoencoder to detect ``unreliable inputs or predictions.'' Unlike our approach, this approach hard-codes the structure of the verification algorithm, and considers out-of-distribution detection applications. 
\vspace{-0.4cm}
\section{Simulations}
\label{sec:exp}
\vspace{-0.2cm}
\subsection{Practical Design Decisions}
We elaborate on some design decisions we made for running prover-verifier training in practice.

\textbf{Practical Constraints on the Verifier:} If the verifier agent is instantiated using neural networks, then one can constrain the verifier by varying the architecture and/or reducing the size and depth of the network. While picking a suitable verifier constraint on toy problems might be tricky, we argue that this no longer remains an issue if the decision problem is inherently difficult to solve. For example, it is easy to find verifier architecture that cannot solve problems that have large search spaces (such as subset sum, SAT problems, or problems in NP in general), yet can check the proofs. 

\textbf{Picking the Differentiable Game Optimizer:} We use standard gradient based optimizers (such as Adam \cite{kingma2014adam}) with alternating updates to find the equilibria of the prover-verifier game, since it was shown by \cite{zhang2021don} that simple alternating gradient descent-ascent is on par with more sophisticated algorithms for adversarial games. 
We suspect that some algorithmic advances in the optimization of generative adversarial networks could help, and leave this investigation to future studies. 

\textbf{Solving for Nash Equilibria:} Since computing the response gradients required to solve for Stackelberg equilibria is often computationally challenging, we opt to solve for Nash equilibria in our simulations. Since all Stackelberg equilibria of the proof-verification game are also Nash equilibria, it is a viable strategy to find a Nash equilibrium then check whether it also corresponds to a verifier-leading Stackelberg equilibrium. Also, the convergence properties of the simultaneous setup are favourible, as suggested by Theorem \ref{thm:sim-gd}. 

\textbf{Training Heuristics:} We list heuristics that improve speed and stability of training in Appendix \ref{sec:heur}. 

\vspace{-3mm}
\subsection{Evaluating Success}
The aim of the PVG is to produce a proof system which is both sound and complete. If this is the case, then we can expect the verifier to achieve high accuracy. 
However, the verifier's accuracy during training does not constitute a good yardstick for evaluating success, since a high accuracy for the verifier could instead reflect problematic game dynamics. For example, high accuracy can be explained by the verifier detecting the prover's ``tell'' (spurious features in prover messages that correlate with the labels) rather than checking the proof. Also, we would like the proof protocol to generalize upwards to provers that are more capable than the one seen at training time. 

Therefore, we propose to stress-test the robustness of the verifier by freezing it and giving a lot more resources to the prover. We did this in two different ways: First, we would freeze the verifier and continue to train the prover.  Second, we would initialize messages to the prover's output and directly optimize the messages (to maximize the verifier's acceptance logit) using powerful optimizers, such as L-BFGS. Both of these techniques remove the verifier's ability to recognize the prover's tell, and also provide a degree of confidence in the robustness of the proof system. (It remains possible that there exist incorrect proofs which the verifier would accept but which are difficult to find.) We report the precision and recall of the frozen verifier against the two methods augmenting the prover. \begin{wrapfigure}{r}{0.4\textwidth}
\vspace{-4mm}
  \centering
\includegraphics[width=0.28\paperwidth]{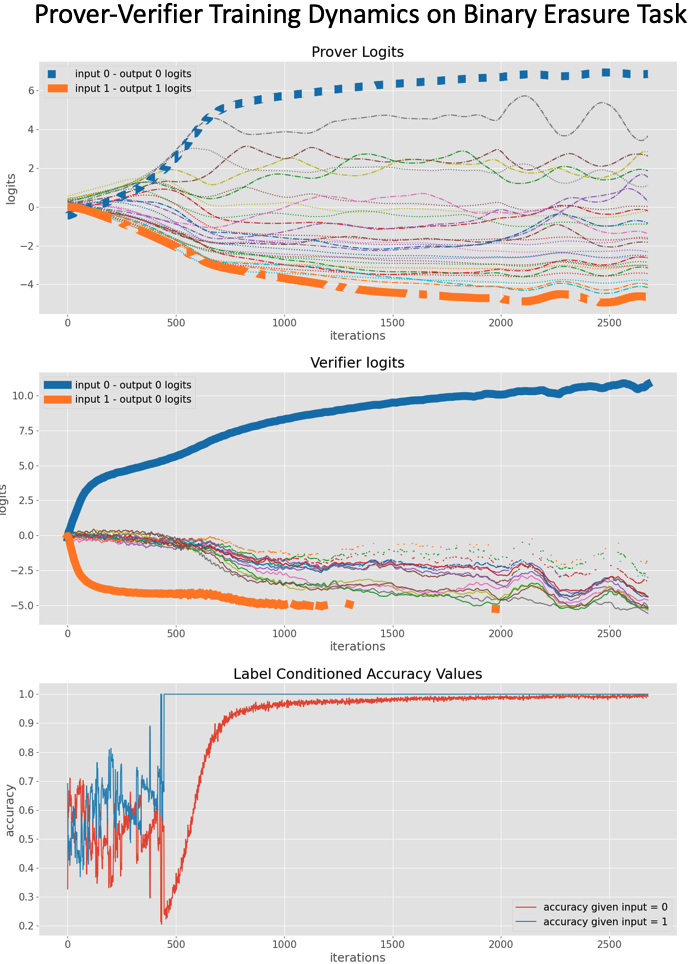}
  \caption{\textbf{BEC training dynamics:} The evolution of the logits, per input-output pair, of the prover (top) and verifier (middle), accompanied by the accuracy of the verifier (bottom). The verifier quickly picks the correct verification protocol, which drives the whole system into the desired equilibrium. }
  \label{fig:bec_logits}
  \vspace{-1.7cm}
\end{wrapfigure}

The stress-tests outlined above can be seen as a way of checking if the verifier represents a verifier-leading Stackelberg equilibrium. Optimizing the prover corresponds to the setup where the problem instance is presented after agents pick their strategies. Optimizing the prover messages links to the setup where the instance is known to the prover (see Table \ref{tbl:desirable_eq}). 

\begin{table}[]
\centering
\caption{(\textbf{Precision and Recall on Binary Erasure Channel Task:}) We froze the verifier networks obtained via. Prover-Verifier Training and standard (collaborative) training and evaluated their precision and recall against both an optimized prover and directly optimized prover messages. The PVG-trained verifier achieved perfect specificity, whereas the collaborative-trained verifier achieved exactly $0$ specificity. }
\label{tbl:bec}
\resizebox{0.9\textwidth}{!}{
\begin{tabular}{@{} l@{}c@{}c@{}c@{}c@{}}
\toprule
 & \multicolumn{2}{c}{\textbf{Frozen Verifier Recall}} & \multicolumn{2}{c}{\textbf{Frozen Verifier Precision}} \\ 
 \cmidrule(lr){2-3} \cmidrule(lr){4-5}
 & \quad\;\textbf{PVG}\; & \;\textbf{Collaborative} & \quad\;\textbf{ PVG}\; & \;\textbf{Collaborative} \\ \midrule
\textbf{Against Optimized Prover} & \quad 1.0 & 1.0 & \quad 1.0 & 0.0 \\
\textbf{Against Optimized Messages} & \quad 1.0 & 1.0 & \quad 1.0 & 0.0 \\
\bottomrule
\end{tabular}}
\vspace{-0.9cm}
\end{table}

\vspace{-3mm}
\subsection{Results} 
\textbf{Binary Erasure Channel Experiments:} We used the binary erasure channel (BEC) task described in Section \ref{sec:find_with_grad} Definition \ref{def:bec} as a test bed to check whether our theoretical results on the desirability of equilibria are matched in practice. We parametrized the prover as a small
fully connected network that takes one-hot-encoded input bits $[1, 0]$ or $[0, 1]$ as inputs and outputs logits over the $K = 16$ communication tokens.
The verifier was instantiated as a small network that takes in the one-hot encoded communication tokens as inputs and outputs logits that are used to compute the probability the verifier assigns to the labels $0$ and $1$. We sampled the input bits uniformly, which means the labels were balanced. We used the Adam optimizer \cite{kingma2014adam} with its PyTorch defaults \cite{paszke2019pytorch}, batch size of $2000$ and learning rate of $0.0003$. We updated the verifier 5 times for every prover update. Details about training setup are described in Appendix \ref{app:bec_details}. 

To assess completeness and soundness, we computed the frozen verifier precision and recall against 1) a fully optimized prover and 2) fully L-BFGS optimized message vectors. The details for how we optimized the prover and message vectors are given in the Appendix \ref{app:bec_details}. 
We compared these results with those obtained using prover and verifier agents trained fully collaboratively (i.e. prover's labels always matches with the verifiers). The results, shown in Table \ref{tbl:bec}, reaffirm our hypothesis that the PVG-trained frozen verifier achieved perfect precision and recall - both against the optimized prover and optimized messages. The collaboratively trained verifier had zero precision, hence is not robust.

In Figure \ref{fig:bec_logits}, we visualize how the outputs (i.e. logits) of the prover and verifier networks for each different input evolve over time, accompanied by the accuracy of the verifier. We highlight - with bolded blue and orange curves - how the prover and verifier treat the special ``0" and ``1" tokens. 
The verifier quickly picks up on the fact that receiving a ``0" is indicative of the label being ``0". Shortly thereafter (around iteration 400) the verifier learns the desired verification protocol. 

\textbf{FindThePlus Experiments:} We tested the efficacy of prover-verifier games on a visual search task we call ``FindThePlus''. As in Figure \ref{fig:splash}, the dataset consists of binary $10 \times 10$ blue-red images that contain a single $3 \times 3$ plus pattern in them. Labels are determined by the color of the plus: images with a red plus are assigned to class 1 and images with a blue plus are assigned to class 0. The verifier is constrained to only process a portion of the input image, so it depends on the prover to advise it ``where to look'' (advice which it is free to ignore). We parametrized the prover as a convolutional net followed by a fully connected layer. 
We parametrized the verifier with a  Spatial Transformer Network-like \cite{jaderberg2015spatial} architecture with multiple heads.

\begin{table}[]
\centering
\caption{(\textbf{Precision and Recall on Find-The-Plus Task:}) We froze the verifier networks obtained via Prover-Verifier Training and collaborative training and evaluated their specificity and recall against both an optimized prover and directly optimized prover messages. The PVG-trained verifier achieved near-perfect specificity against the optimized prover and optimized messages. The specificity of the baseline was zero in all cases. }
\label{tbl:ftp}
\resizebox{0.9\textwidth}{!}{
\begin{tabular}{@{} l@{}c@{}c@{}c@{}c@{}}
\toprule
 & \multicolumn{2}{c}{\textbf{Frozen Verifier Recall}} & \multicolumn{2}{c}{\textbf{Frozen Verifier Precision}} \\ 
 \cmidrule(lr){2-3} \cmidrule(lr){4-5}
 & \quad\;\textbf{PVG}\; & \;\textbf{Collaborative} & \quad\;\textbf{ PVG}\; & \;\textbf{Collaborative} \\ \midrule
\textbf{Against Optimized Prover} & \quad 1.0 & 1.0 & \quad 1.0 & 0.0 \\
\textbf{Against Optimized Messages} & \quad 1.0 & 1.0 & \quad 0.99 & 0.0 \\
\bottomrule
\end{tabular}}
\vspace{-0.8cm}
\end{table}

The training details are provided in Appendix \ref{app:ftp_details}. The frozen verifier precision and recall corresponding to Prover-Verifier training, as well as collaborative training are shown in Table \ref{tbl:ftp}. We optimized the prover and the messages using the same setup we used in the Binary Erasure Channel experiments. As before, the verifier trained under the prover-verifier game dynamics achieves perfect precision against the optimized prover, while the verifier trained under the standard, collaborative learning dynamics achieves exactly $0$ precision. The frozen verifier precision was $0.99$ if we directly optimized the proof vectors. We also visualized where the Prover-Verifier trained verifier looks using the proof vectors from the prover. An example is displayed in Figure \ref{fig:splash} (more in Appendix \ref{app:ftp_details}). These visualizations indicate that not only does the prover send the coordinate of the blue plus whenever it exists, it also sends the coordinates of maximally convincing patches if there's not a blue plus in the image.

\vspace{-4mm}
\section{Discussion}
\label{sec:discussion}
We proposed Prover-Verifier Games (PVG), a game-theoretic framework to encourage neural networks to solve decision problems in a verifiable manner. We demonstrated - both theoretically and empirically - that there exist PVG formulations that can learn robust proof-verification protocols. 

Future work should focus on scaling Prover-Verifier Games to more challenging problems. It would be interesting to apply this concept on problems for which no verification protocol is yet known. Our theoretical results focus on the desirability of the equilibria found in different PGB formulations. Since solving for exact equilibria is difficult, more work is needed to understand the behaviour of the prover and verifier agents in approximate equilibria. Deploying PVG-trained systems before ensuring that an equilibrium has been reached could potentially lead to unexpected behaviour. 

Prover-Verifier Games could have applications in a variety of domains: 1) By picking verifier architectures that are aligned with human perception, we could learn to generate interpretable proofs.  2) We could learn features that are useful in downstream tasks. 3) We can use insights and methods from this paper towards enabling humans to safely interact with powerful machine learning systems that cannot be trusted.

\section*{Acknowledgements}
We would like to thank (in alphabetical order) Xuchan Bao, Jacob Foerster, Geoffrey Hinton, Qiyang Li, Chris Maddison and Sushant Sachdeva for helpful comments and discussions. We also thank Saminul Haque for his suggestion on performing analytic equilibria analysis on the binary erasure channel problem. 

\vspace{-0.1mm}

\noindent
RG was supported by the CIFAR AI Chairs program.
CA was supported by the CIFAR Graduate Scholarships - Doctorate scholarship. 
Resources used in preparing this research were provided, in part, by the Province of Ontario, the Government of Canada through CIFAR, and companies sponsoring the Vector Institute.

\bibliographystyle{plainnat}
\bibliography{references}

\appendix

\section{When Log-Loss Gradients Vanish}
\label{app:vanish}
\newcommand{\logits}{\boldsymbol\ell}
The gradients of the log-loss (i.e. the loss function used in Equations \eqref{eq:vloss} and \eqref{eq:ploss}) 
vanish if and only if the agents fulfill their incentives (i.e. classify their labels correctly). Let $\logits = v(x, p(x)) \in \reals^d$ be the logits that the verifier network outputs for input $x$ and prover message $p(x)$. Let $S: \reals^d \mapsto \reals^d$ be the softmax operation with temperature $1$: $[S(\logits)]_i = \frac{e^{\logits_i}}{\sum_{j=0}^{d}e^{\logits_j}}$. We take $S(\logits)_i$ to be the probability the verifier assigns for the correct label to be $i$. 

Let $y$ be the correct label. The verifier loss and its gradient  with respect to the logits can be given by:
\begin{align}
    \mathcal{J}_v &= - \log \frac{e^{\logits_y}}{\sum_j e^{\logits_j}} \\
    \pp{\mathcal{J}_v}{\logits} &= - (1 - \frac{e^{\logits_y}}{\sum_j e^{\logits_j}})
\end{align}

If the verifier assigns a high probability to class $y$, then $\frac{e^{\logits_y}}{\sum_j e^{\logits_j}}$ will approach $1$ and the gradients will vanish. Otherwise, the gradient won't vanish. The derivation is identical for prover's loss. 

\section{Proofs of Equilibrium Theorems}
\label{app:eq_proofs}
\proverleading*
\begin{proof}
It suffices to prove that the prover-leading Stackelberg equilibrium is neither necessary nor sufficient for having found a complete and sound proof-verification protocol on the following decision problem:
\begin{definition}[Simplified Binary Erasure Channel Problem]
    Let $\mathcal{X} = \mathcal{Y}=\mathcal{\tilde{Y}} = \{0, 1\}$ and $\mathcal{M} = \{0, 1, 2\}$. Let the input distribution $\mathbb{P}_{X}$ be the uniform distribution over $\mathcal{X}$, and let the label generating function be defined as $y = f(x)$. That is, the label is identical to the input. The prover is arguing for label $y' = 1$. 
    
    The prover can choose between two policies $p_1$ and $p_2$:
    \begin{equation*}
        p_1 = 
\begin{cases} 
      0 & x = 0 \\
      1 & x = 1
\end{cases}
        \;\;\;\;\;\;\;\; p_2 =
\begin{cases} 
      2 & x = 0 \\
      2 & x = 1
\end{cases}
    \end{equation*}

    The verifier can pick between three policies $v_1$, $v_2$ and $v_3$ defined as follows:
    
    \begin{equation*}
        v_1 = 
\begin{cases} 
      0 & m = 0 \\
      1 & m = 1  \\
      0 & m = 2
\end{cases}
        \;\;\;\;\;\;\;\;v_2 =
\begin{cases} 
      0 & m = 0 \\
      0 & m = 1  \\
      0 & m = 2
\end{cases}
      \;\;\;\;\;\;\;\;v_3 = 
\begin{cases} 
      1 & m = 0 \\
      1 & m = 1  \\
      1 & m = 2
\end{cases}
    \end{equation*}
    
    Note that the mappings available to the verifier ignore the input. Also note that the prover-verifier pair ($p_1, v_1$) satisfy Equations \ref{eq:adpcompleteness} and \ref{eq:adpsoundness}. 
\end{definition}

\textit{Lack of Necessity:} To show lack of necessity, it suffices to show that the complete and sound prover-verifier pair $(p_1, v_1)$ don't constitute a prover-leading Stackelberg equilibrium. If the prover picks $p_1$, the verifier can attain exactly $0$ loss by assigning a weight of $1$ to $v_1$ and a weight of $0$ to $v_2$ and $v_3$. In this scenario, the prover's loss will be infinity:
\begin{align*}
    \mathcal{J}_p & =\underbrace{-\frac{1}{2}\log v_2(x=0, p(x)=0)}_{+\infty} + \underbrace{-\frac{1}{2}\log v_2(x=1, p(x)=1)}_{0} = +\infty
\end{align*}

On the other hand, if the prover picks policy $p_2$, then its messages have exactly 0 mutual information with the input. In this scenario, the best the verifier can do to minimize its loss function is to assign equal probability to both outputs $0$ and $1$. It can do this by assigning $0$ weight to $v_1$ and weights $0.5$ to $v_2$ and $v_3$. The prover's loss in this scenario is: 
\begin{align*}
    \mathcal{J}_p & =\underbrace{-\frac{1}{2}\log 0.5}_{\text{when x=0}} + \underbrace{-\frac{1}{2}\log 0.5}_{\text{when x=1}} = 1
\end{align*}
Since the prover can do better by picking $p_2$ over $p_1$, the pair $(p_1, v_1)$ doesn't constitute a prover-leading Stackelberg equilibrium. 

\textit{Lack of Sufficiency:} Consider the scenario discussed above where the prover picks $p_2$, and the verifier responds by assigning $0.5$ weight to $v_2$ and $v_3$. This is a Stackelberg equilibrium despite not corresponding to a complete and sound decision rule. This is because 1) $p_1$ achieves the best prover loss given that the verifier
\end{proof}

\verifierleading*
\begin{proof}
We first prove necessity, then sufficiency. 

\textit{Necessity:} By the problem definition, there exists a prover-verifier pair $(p_1, v_1)$ that satisfy Equations \ref{eq:adpcompleteness} and \ref{eq:adpsoundness}, therefore constitute a complete and sound proof-verification protocol. Consider the scenario where the prover's policy is $p_1$, and the verifier assigns a weight of $1$ to policy $v_1$. These prover-verifier strategies constitute a (Stackelberg) equilibrium of the verifier-leading sequential game formulation. This is because (1) $p_1$ already achieves the smallest prover loss for the given verifier strategy (2) the verifier already achieves $0$ loss and therefore has no incentive to change its strategy. To see (1), note that the verifier can never be fooled if the correct label is $0$, so the prover cannot possibly lower its loss by changing to a different strategy. If the correct label is $1$, then the prover achieves $0$ loss by following $p_1$. Therefore, the prover has no incentive to change its strategy. 

\textit{Sufficiency:} To show that having reached a verifier-leading Stackelberg equilibrium implies a complete and sounds proof-verification protocol, we follow a ``proof by contrapositive" argument and prove, instead, that if the verifier is not representing a complete and sound proof-verification protocol, it cannot possibly be a Stackelberg equilibrium. 

Assume that the verifier assigns a non-zero weight to a verification policy $v^{**}$ that is neither complete and/nor sound. If it's not complete, then for $v^{**}$ there exists a label-$1$ input $x$ such that there doesn't exists a prover policy that can convince the verifier that the correct label is $1$. This will cause the verifier's loss to be nonzero. If it's not sound, then there exists a label-$0$ input for which there exists a prover policy that can fool the verifier to answering $1$. This will also cause the verifier's loss to be nonzer. On the other hand, if the verifier assigns $0$ weight to all non-complete nor non-sound verification modules, then it's loss will be exactly $0$. This implies that the verifier can do strictly better by switching to a complete and sound verification protocol. 
\end{proof}

\nash*
\begin{proof}
Proving necessity for the simultaneous setup is identical to proving necessity of the verifier-leading Stackelberg setup. 

The complete and sound prover-verifier pair $(p_1, v_1)$ (i.e. the verifier strategy of assigning a weight of $1$ to $v_1$), whose existence is guaranteed by the problem definition, form a Nash equilibrium. 

If we fix the verifier strategy at $v_1$, then  the prover has no incentive to change it's strategy. In the collaborative case (when the input belongs to class $1$, the prover already achieves $0$ loss. In the adversarial case, the verifier is never fooled (by the soundness assumption) so the prover cannot improve its loss by switching to a different strategy. Therefore, if we fix the verifier strategy, the prover wouldn't move. 

If we fix the prover strategy at $p_1$, then the verifier already achieves $0$ loss by following $v_1$. Therefore it doesn't have an incentive to move. 

Therefore, the $(p_1, v_1)$ pair form a Nash equilibrium. 
\end{proof}

\equivalentequilibria*
\begin{proof}
Having reached a verifier-leading Stackelberg equilibria is sufficient for having found a complete and sound proof-verification protocol (by Theorem \ref{thm:verleading}). A sound and complete proof-verification protocol already constitute a Nash equilibrium \ref{thm:nash}. Therefore, having reached a verifier-leading Stackelbeg equilibrium implies having reached a Nash equilibrium. 
\end{proof}

\section{Proofs for Equilibrium Analysis}
\sim*
\begin{proof}
First, we can assume WLOG that $K = 2$ because all these tokens from $2$ to $K$ are ``exactly'' same for both prover and verifier. So if we do not concern about computation time, we can group tokens $2 - K$ as a single token. 
In this case, the only parameters for the prover is just $p_0^0$ and $p_1^1$. For notational convenience, we let $b \triangleq p_0^0$ and $a \triangleq p_1^1$.
Further, we assume the learning rates $\eta_\prover \ll \eta_\verifier$. With two time-scale updates, one can show that the verifier would converge first. Given nondegenerate initialization and entropy regularization on $q$ with coefficient $\lambda > 0$ (this is basically label smoothing), we have
\begin{equation}\label{eq:q-value}
    q_1 = \frac{1 + \lambda}{1 + 2\lambda}, \quad q_0 = \frac{\lambda}{1 + 2\lambda}, \quad q_2 = \frac{1 - a + \lambda (2 - a - b)}{(2 - a - b)(1 + 2 \lambda)}.
\end{equation}
Recall that the objective of the prover is 
\begin{equation}
    \loss(a, b) = -a \log q_1 - b \log q_0 - (2 - a - b) \log q_2
\end{equation}
Hence, we have the mean gradients of the prover 
\begin{align}
    \frac{\partial \loss}{\partial a} &= \log q_2 - \log q_1 = \log \frac{1 - a + \lambda (2 - a -b)}{(1 + \lambda)(2 - a - b)} = -\log \left( 1 + \frac{1 - b}{1 - a + \lambda (2 - a - b)}\right) \label{eq:grad-a} \\
    \frac{\partial \loss}{\partial b} &= \log q_2 - \log q_0 = \log \frac{1 - a + \lambda (2 - a -b)}{\lambda(2 - a - b)} = -\log \left( 1 - \frac{1 - a}{1 - a + \lambda (2 - a - b)}\right) \label{eq:grad-b}
\end{align}
As long as $a < 1$, $b$ would keep decrease as the gradient w.r.t $b$ in \eqref{eq:grad-b} is $> 0$. This further implies that the gradient w.r.t $a$ is $< 0$ and hence $a$ would keep increasing until $a = 1$.

Further, we show $a = 1, b < 1$ and $q$ taking the values in \eqref{eq:q-value} are the Nash equilibria of the game. It is easy to see the $q$ value are optimal given fixed $a, b$, so it is impossible to improve unilaterally for the verifier. On the other hand, we know $\log q_1 > \log q_0 = \log q_2$ if $a = 1$ and $b < 1$, so the prover could not improve either. This completes the proof.

\end{proof}

\pfirst*
\begin{proof}
From the viewpoint of prover, it is essentially single-objective optimization problem with the loss function in the following form:
\begin{equation}
    \loss(a, b) = -a\log\frac{1+\lambda}{1 + 2\lambda} - b \log \frac{\lambda}{1 + 2\lambda} - (2 - a - b) \log \frac{1 - a + \lambda(2 -a -b)}{(2 - a - b)(1 + 2\lambda)}
\end{equation}
Surprisingly, the Stackelberg game formulation leads to bad joint strategy where the prover may send wrong messages with the input $1$.
In particular, we have the gradient:
\begin{align}
    \frac{\partial \loss}{\partial a} &= -\log \left( 1 + \frac{1 - b}{1 - a + \lambda (2 - a - b)}\right) + \frac{1 - b}{1 - a + \lambda (2 - a - b)} \label{eq:grad-a-stack} \\
    \frac{\partial \loss}{\partial b} &= -\log \left( 1 - \frac{1 - a}{1 - a + \lambda (2 - a - b)}\right) - \frac{1 - a}{1 - a + \lambda (2 - a - b)} \label{eq:grad-b-stack}
\end{align}
By the identity $\log (1 + x) < x$, we immediately know that $\frac{\partial \loss}{\partial a} > 0$ and $\frac{\partial \loss}{\partial b} > 0$ if $a,b \neq 1$. Interestingly, this is exactly the opposite of the simultaneous game. This immediately implies that $a, b$ will converge to $0$. This completes the proof.
\end{proof}

\section{Training Heuristics}
\label{sec:heur}

We find that the following design decisions make learning more consistent and help speed up convergence, while not being essential for success: 
\begin{itemize}
    \item \textbf{Different Number of Gradient Steps per Agent:} We find that updating the verifier multiple times per a single prover update helps. It is especially helpful in preventing the verifier from ignoring the prover's messages early on in training. Note that proving Theorem \ref{thm:sim-gd} requires setting a larger learning rate to the verifier.  
    \item \textbf{Adaptive Prover Step Size:} Adaptively increasing the prover-update frequency if the verifier's accuracy surpasses a pre-selected (i.e. 90 percent) value helps speed up the transition from ``nearly robust" protocols to fully robust ones at the terminal phase of the training. 
    \item \textbf{Prover Pretraining: } Pretraining the prover to solve the decision problem before initiating the prover-verifier game  ensures that the first messages the verifier receives from the prover will likely contain features that correlate with the labels and therefore decreases the likelihood that the verifier will ignore the prover's messages. We do this by appending two extra heads to the prover that are trained to do solve the task at hand, as well as to autoencode the input. 
\end{itemize}

\section{Experiment Details}
\label{app:exp_details}

\subsection{Binary Erasure Channel Experiments}
\label{app:bec_details}
\begin{itemize}
    \item \textbf{Task Setup and Data Distribution:} We picked there to be $16$ communication tokens that the prover can communicate to the prover. We sampled the input bits uniformly, which mean the labels were balanced. 
    \item \textbf{Prover Network:} We parametrized the prover as a two hidden layer fully connected LeakyReLU network with width 100 and LayerNorm \cite{ba2016layer} layers fully connected network. We used the Gumbel-softmax trick (with a constant temperature of $1$) to sample (one hot encoded) communication tokens. We use the Straight-Through Gumbel Softmax Estimator \cite{maddison2016concrete} to backpropagate through the discrete sampling step. 
    \item \textbf{Verifier Network: } The verifier was instantiated as a one hidden layer, fully connected LeakyReLU neural network with width 100 and LayerNorm layers.
    \item \textbf{Optimization setup}: We used the Adam optimizer for updating both the prover and verifier with a learning rate of $0.0003$ and batch size of $2000$. We updated the verifier $5$ times for each prover update. We observed that training the prover to solve the classification and autoencoding auxiliary tasks doesn't have an impact on training, likely due to the simplicity of this task. We ran optimization until the verifier learned a complete and sound proof-verification protocol (or, frozen verifier's precision and recall reached the value of $1$)
    \item \textbf{Evaluation details: } To train the prover against the frozen verifier, we use the Adam optimizer with a learning rate of $0.0003$ for $500$ gradient steps, which we found was enough to reach near-convergence. To optimize the prover messages directly, we use the PyTorch implementation \cite{paszke2019pytorch} of the LBFGS optimizer \cite{nocedal2006numerical}  with a learning rate of 1, history size of 300 and ``strong Wolfe" line search. We terminate the optimization when either the L1 norm of the change in iterates becomes less than $10e-9$, or when the L1 norm of gradients become less than $10e-5$, or when the number of optimizer steps surpass $300$. We found that the the tolerance-based termination criteria are met almost always before the maximum number of optimizer steps is reached. 
    \item \textbf{Hyperparameter Optimization and Tuning:} We tuned the following aspects of training 1) Optimizer (Adam vs. SGD), 2) Learning rate (1e-3, 3e-4, 1e-4), 3) Batch size (500, 2000), 4) Network width (32, 100) 5) verifier number of steps (1, 5, 10, 20, 50). We found that using Adam, the larger batch size of $2000$ and a larger width of $100$ for the networks were crucial. Setting the learning rate to be $3e-4$ and verifier number of steps to $5$ stabilized training and made it more consistent. 
    \item \textbf{Training hardware: } Due to the low computational requirements of the BEC experiments, the experiments were run on an 2019 Modle MacBookPro with 8-Core Intel Core i9 processors. 
\end{itemize}

\subsection{FindThePlus Experiments:} 
\label{app:ftp_details}
\begin{itemize}
    \item \textbf{Task Setup and Data Distribution:} We used $10 \times 10$ images with exactly one $3 \times 3$ plus patterns in them. The images were single-channel, and the pixel values were binary (i.e. $0$ or $1$). The plus patterns were either formed using $0$-pixels or $1$-pixels, which we refer to ``red" or ``blue" plusses in the original text for additional clarity. The used balanced labels during training. 
    \item \textbf{Prover Network: } We parametrized the prover as a 3 hidden layer convolutional neural network with 40 hidden channels, kernel size of 3, padding of 1 and LeakyReLU \cite{maas2013rectifier} nonlinearities followed by Instance Normalization layers. We passed the output of this convolutional neural network through a mean-pooling operation follow by a linear projection layer that resulted in the message vectors. The message vectors were $32$ dimensional. 
    
    We also trained the prover network to both solve the decision problem itself (i.e. classification), as well as autoencode the inputs. We did this by tacking ``auxiliary heads" that process the output of the convolutional backbone described above and produce their outputs. We used a linear classification auxiliary head. For the autoencoding auxiliary head, we used a Spatial Broadcast Decoder \cite{watters2019spatial} with 3 convolutional layers with 32 channels, kernel size of 3 and LeakyReLU activations.

    \item \textbf{Verifier Network: } We parametrized the verifier with the  Spatial Transformer Network-like \cite{jaderberg2015spatial} architecture with $4$ heads. This architecture linearly transforms the prover messages into $2 \times 3$ matrices that parametrize what kind of spatial transformation should be applied to the image. The transformed images are then fed into a final linear classifier that produces the verifier predictions. A schematic diagram of a single head of this architecture is displayed in Figure \ref{fig:stn}. We constrained the allowed spatial transformations to only include shifts, and hard-coded the rest to zoom in by a factor of $2$, and have no rotation and shear. These constraints only serve to improve the clarity plus visualizations and allow us to inspect training - training still succeeds if we remove these constraints. We used zero padding, if the verifier chose to shift its gaze to outside of the image. 
    
    \item \textbf{Optimization Setup:} We used the Adam optimizer for updating both the prover and verifier with a learning rate of $0.0003$ and batch size of $2000$. We updated the verifier $1$ times for each prover update. We observed that pretraining training the prover to solve the classification task speeds up training considerably, while the autoencoding head doesn't have a significant impact. We adaptively incremented the prover's number of steps if the verifier's accuracy kept above $75\%$ for $20$ consequent game steps, and stopped incrementing when the prover's number of steps reached $15$. We ran pretraining using the same optimizer settings, for 100 gradient steps (enough to let the prover achieve around $80\%$ accuracy). 
    
    We employed a bit of Label Smoothing \cite{muller2019does} (with coefficient $0.05$) to the verifier loss to regularize training. For pretraining the prover classification and autoencoding auxiliary heads, we used the cross entropy loss and (pixelwise) mean squared error respectively. We kept training the prover to classify and autoencode throughout training. We report our results at the point verifier achieved it's best frozen precision against directly optimized message vectors. Note that the verifier achieves perfect frozen precision against an optimized prover way before this point. 
    
    \item \textbf{Evaluation Details: } We used exactly the same evaluation setup we used in the Binary Erasure Channel Experiments. We report the results on the training checkpoint with the highest overall frozen verifier accuracy against optimized proofs, as the performance of the verifier tends to fluctuate all throughout training. We also report the best result amongst training runs initialized with four random number generator seeds. 
    
    \item \textbf{Hyperparameter Optimization and Tuning:} We used the setup that worked best for the Binary Erasure Taks and found that it also yielded good results for the FindThePlus task. We re-tuned the number of verifier steps and found that updating the verifier less frequently (i.e. once per prover step) doen't degrade performance, while speeding up convergence. We also tuned the duration of prover pretraining. We found that pretraining the prover til convergence actually hurts performance. We link this to the Neural Collapse phenomenon elaborated by \citet{papyan2020prevalence} and leave a more detailed examination to future works.
    
    \item \textbf{Training Hardware:} We ran training using a single NVIDIA P100 GPU with Intel Xeon Silver processors. 

\begin{figure}
  \centering
\includegraphics[width=0.6\paperwidth]{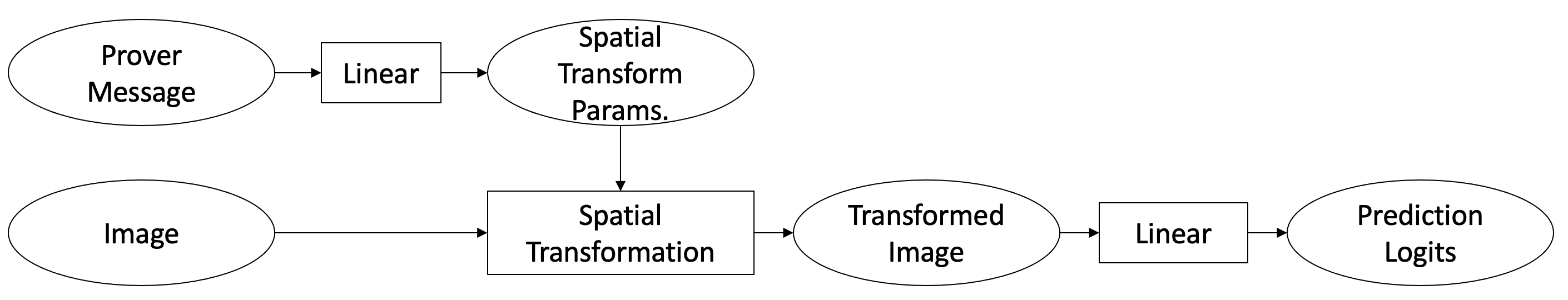}
  \caption{\textbf{FindThePlus Verifier Architecture:}  A schematic diagram of a single head of the Spatial Transformer Network-like verifier architecture we used in out FindThePlus experiments. }
  \label{fig:stn}
 \vspace{-1cm}
\end{figure}
\end{itemize}

\subsection{Additional Plus Visualizations}
In Figure \ref{fig:plus-viz}, we show more visualizations that display where the verifier looks at in the input images. 

\begin{figure}[!b]
  \centering
\includegraphics[width=0.4\paperwidth]{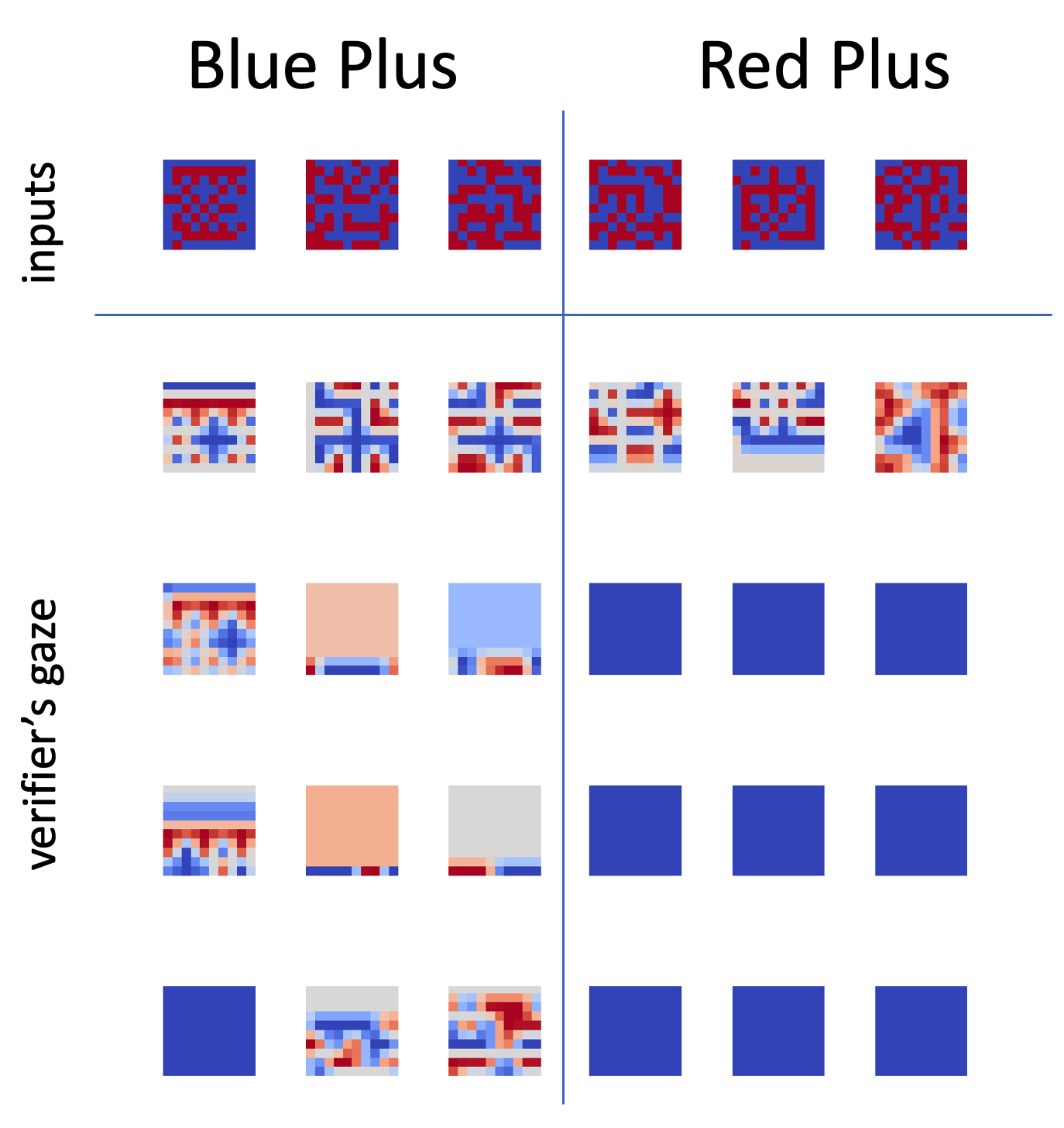}
  \caption{\textbf{Visualizing Verifier's Gaze:} We visualized where each of the verifier's $4$ Spatial Transformer heads learn to look at. The top row displays the inputs. The left three inputs contain a blue plus in them (what the prover is defending). Each subsequent row corresponds to a different verifier head. We observe that the first head consistently contain blue plusses at consistent locations, indicating that the prover is communicating the coordinate of the plus to the verifier. If there's not a blue plus in the image, the prover sends the coordinate of a convincing image patch. }
  \label{fig:plus-viz}
\end{figure}

\end{document}